\newcommand{\bff}{\mathbf{f}}
\newcommand{\bg}{\mathbf{g}}
\newcommand{\bu}{\mathbf{u}}
\newcommand{\R}{\mathbb{R}}
\theoremstyle{plain}
\newtheorem{theorem}{Theorem}[section]
\newtheorem{proposition}[theorem]{Proposition}
\newtheorem{lemma}[theorem]{Lemma}
\theoremstyle{definition}
\newtheorem{definition}[theorem]{Definition}
\theoremstyle{remark}
\newtheorem{remark}[theorem]{Remark}
\newcommand{\OTCP}{\textbf{OT-CP}}
\newcommand{\MCP}{\textbf{M-CP}}
\newcommand{\MergeCP}{\textbf{Merge-CP}}
\newcommand{\argmin}{\mathop{\mathrm{arg\,min}}}
\DeclareMathOperator{\lse}{min}
\DeclareMathOperator*{\argmax}{argmax}
\icmltitlerunning{Multivariate Conformal Prediction using Optimal Transport}
\begin{document}
\twocolumn[
\icmltitle{Multivariate Conformal Prediction using Optimal Transport}

\icmlsetsymbol{equal}{*}

\begin{icmlauthorlist}

\icmlauthor{Michal Klein}{yyy}
\icmlauthor{Louis Bethune}{yyy}
\icmlauthor{Eugene Ndiaye}{yyy}
\icmlauthor{Marco Cuturi}{yyy}
\icmlcorrespondingauthor{Eugene Ndiaye}{e\_ndiaye@apple.com}
\end{icmlauthorlist}

\icmlaffiliation{yyy}{Apple}

\icmlkeywords{Machine Learning, ICML}

\vskip 0.3in
]

\printAffiliationsAndNotice{\icmlEqualContribution} %

\begin{abstract}
Conformal prediction (CP) quantifies the uncertainty of machine learning models by constructing sets of plausible outputs. These sets are constructed by leveraging a so-called conformity score, a quantity computed using the input point of interest, a prediction model, and past observations. CP sets are then obtained by evaluating the conformity score of all possible outputs, and selecting them according to the rank of their scores. 
Due to this ranking step, most CP approaches rely on a score functions that are univariate. The challenge in extending these scores to multivariate spaces lies in the fact that no canonical order for vectors exists. To address this, we leverage a natural extension of multivariate score ranking based on optimal transport (OT). Our method, \OTCP, offers a principled framework for constructing conformal prediction sets in multidimensional settings, preserving distribution-free coverage guarantees with finite data samples. We demonstrate tangible gains in a benchmark dataset of multivariate regression problems and address computational \& statistical trade-offs that arise when estimating conformity scores through OT maps.
\end{abstract}

\section{Introduction}

Conformal prediction (CP) \cite{gammerman1998learning, Vovk_Gammerman_Shafer05, Shafer_Vovk08} has emerged as a simple framework to quantify the prediction uncertainty of machine learning algorithms without relying on distributional assumptions on the data. For a sequence of observed data, and a new input point,
$$D_n = \{(x_1, y_1), ..., (x_n, y_n)\} \text{ and } x_{n+1},$$
the objective is to construct a set that contains the unobserved response $y_{n+1}$ with a specified confidence level $100(1- \alpha)\%$. This involves evaluating scores $S(x, y, \hat y)\in\mathbb{R}$ such as the prediction error of a model $\hat y$, for each observation $(x, y)$ in $D_n$ and ranking these score values. The conformal prediction set for the new input $x_{n+1}$ is the collection of all possible responses $y$ whose score $S(x_{n+1}, y, \hat y)$ ranks small enough to meet the prescribed confidence threshold, compared to the scores $S(x_i, y_i, \hat y)$ in the observed data. 

CP has undergone tremendous developments in recent years ~\citep{barber2023conformal,park2024semiparametric,tibshirani2019conformal, guha2024conformal} which mirror its increased applicability to challenging settings \citep{straitouri2023improving,lu2022fair}. To name a few, it has been applied for designing uncertainty sets in active learning \citep{Ho_Wechsler08}, anomaly detection \citep{Laxhammar_Falkman15, Bates_Candes_Lei_Romano_Sesia21}, few-shot learning \citep{Fisch_Schuster_Jaakkola_Barzilay21}, time series \citep{Chernozhukov_Wuthrich_Zhu18, Xu_Xie20, chernozhukov2021exact, Lin_Trivedi_Sun22, zaffran2022adaptive}, or to infer performance guarantees for statistical learning algorithms \citep{Holland20, Cella_Martin20}; and recently to Large Language Models \cite{kumar2023conformal, quach2023conformal}. We refer to the extensive reviews in \citep{balasubramanian2014conformal} for other applications in machine learning.  \looseness=-1

By design, CP requires the notion of order, as the inclusion of a candidate response depends on its relative ranking to the scores observed previously. Hence, the classical strategies developed so far largely target score functions with univariate outputs. This limits their applicability to multivariate responses, as ranking multivariate scores $S(x, y, \hat y) \in \mathbb{R}^d, d\geq 2$ is not as straightforward as ranking univariate scores in $\mathbb{R}$. 

\textbf{Ordering Vector Distributions using Optimal Transport.} In parallel to these developments, and starting with the seminal reference of \citep{chernozhukov2017} and more generally the pioneering work of \citep{hallin2021,hallin2022center,hallin2023efficient}, multiple references have explored the possibilities offered by the optimal transport theory to define a meaningful ranking or ordering in a multidimensional space. Simply put, the analog of a rank function computed on the data can be found in the optimal \citeauthor{Bre91} map that transports the data measure to a uniform, symmetric, centered measure of reference in $\mathbb{R}^d$. As a result, a simple notion of a univariate rank for a vector $z\in\mathbb{R}^d$ can be found by evaluating the distance of the image of $z$ (according to that optimal map) to the origin. This approach ensures that the ordering respects both the geometry, i.e., the spatial arrangement of the data and its distribution: points closer to the center get lower ranks.\looseness=-1

\paragraph{Contributions}
We propose to leverage recent advances in computational optimal transport~\citep{PeyCut19}, using notably differentiable transport map estimators~\citep{pooladian2021entropic,cuturi2019differentiable}, and apply such map estimators in the definition of multivariate score functions. More precisely:
\begin{itemize}[leftmargin=.2cm,itemsep=.0cm,topsep=0cm,parsep=2pt]
\item \OTCP: We extend conformal prediction techniques to multivariate score functions by leveraging optimal transport ordering, which offers a principled way to define and compute a higher-dimensional quantile and cumulative distribution function. As a result, we obtain distribution-free uncertainty sets that capture the joint behavior of multivariate predictions that enhance the flexibility and scope of conformal predictions.
\item We propose a computational approach to this theoretical ansatz using the entropic map~\citep{pooladian2021entropic} computed from solutions to the \citeauthor{Sinkhorn64} problem~\citep{cuturi2013sinkhorn}. We prove that our approach preserves the coverage guarantee while being tractable.
\item We show the application of \OTCP\ using a recently released benchmark of regression tasks~\citep{dheur2025multioutputconformalregressionunified}.
\end{itemize}
We acknowledge the concurrent proposal of \citet{thurin2025optimaltransportbasedconformalprediction}, who adopt a similar approach to ours, with, however, a few important practical differences, discussed in more detail in Section~\ref{sec:concurrent}.

\section{Background}

\subsection{Univariate Conformal Prediction}
\label{subsec:Univariate_Conformal_Prediction}

We recall the basics of conformal prediction based on real-valued score function and refer to the recent tutorials \citep{Shafer_Vovk08, angelopoulos2021gentle}. In the following, we denote $[n]:=\{1,\dots, n\}$.

For a real-valued random variable $Z$, it is common to construct an interval $[a,b]$, within which it is expected to fall, as\looseness=-1
\begin{equation}\label{eq:true_confidence_set}
\mathcal{R}_\alpha = \{z \in \mathbb{R}: F(z) \in [a, b]\}
\end{equation}
This is based on the probability integral transform that states that the cumulative distribution function $F$ maps variables to uniform distribution, i.e., 
$\mathbb{P}(F(Z) \in [a, b]) = \mathbb{U}([a, b]).$
To guarantee a $(1-\alpha)$ uncertainty region, it suffices to choose $a$ and $b$ such that $ \mathbb{U}([a, b]) \geq 1-\alpha$ which implies
\begin{equation}\label{eq:coverage_exact_univariate_quantile_region}
    \mathbb{P}\left(Z \in \mathcal{R}_\alpha\right) \geq 1-\alpha.
\end{equation}
Applying it to a real-valued score $Z = S(X, Y)$ of the prediction model $\hat y$, an uncertainty set for the response of a given a input $X$ can be expressed as
\begin{equation}\label{eq:exact_uq}
\mathcal{R}_\alpha(X) = \big\{y \in \mathcal{Y}: F\circ S(X, y) \in [a, b]\big\}.
\end{equation}
However, this result is typically not directly usable, as the ground-truth distribution $F$ is unknown and must be approximated empirically with $F_n$ using finite samples of data. When the sample size goes to infinity, one expects to recover \Cref{eq:coverage_exact_univariate_quantile_region}.
The following result provides the tool to obtain the finite sample version \cite{Shafer_Vovk08}.

\begin{lemma}\label{lm:PIT_onedim}
If $Z_1, \dots, Z_n, Z$ be a sequence of real-valued exchangeable random variables, then it holds
\begin{align*}
 F_n(Z) &\sim \mathbb{U}\left\{0, \frac1n, \frac2n, \ldots, 1\right\} \\
\mathbb{P}(F_n(Z) \in [a, b]) &= \mathbb{U}_{n+1}([a, b])
= \frac{\lfloor n b\rfloor - \lceil n a\rceil + 1}{n+1} .
\end{align*}
\end{lemma}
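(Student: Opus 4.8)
The plan is to reduce the entire statement to the classical fact that, among exchangeable scalars, the rank of a designated element is uniformly distributed. Concretely, with $F_n(z) = \frac{1}{n}\sum_{i=1}^n \mathbf{1}\{Z_i \le z\}$, I would introduce the rank of $Z$ in the augmented sample, $R := 1 + \sum_{i=1}^n \mathbf{1}\{Z_i < Z\} \in \{1, \dots, n+1\}$, and note that, as long as the $Z_i$ are almost surely distinct, $n F_n(Z) = R - 1$. It then suffices to prove $R \sim \mathbb{U}\{1, \dots, n+1\}$; the two displayed claims follow by a change of variables and an integer count.

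For the uniformity of $R$: exchangeability of $(Z_1, \dots, Z_n, Z)$ means its law is invariant under all $(n+1)!$ coordinate permutations. I would condition on the unordered multiset of values (equivalently, on the order statistics); on the event that they are distinct, the conditional law of the ordered $(n+1)$-tuple is uniform over the $(n+1)!$ arrangements, so the position at which the last coordinate $Z$ lands — which is exactly $R$ — is equally likely to be any of $1, \dots, n+1$. Averaging over the conditioning gives $\mathbb{P}(R = k) = \tfrac{1}{n+1}$ for each $k$, hence $F_n(Z) = (R-1)/n$ is uniform on $\{0, \tfrac1n, \dots, 1\}$, which is the first claim.

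The second claim is then pure counting: $\mathbb{P}(F_n(Z) \in [a,b]) = \frac{1}{n+1}\,\bigl|\{k \in \{0,\dots,n\} : a \le k/n \le b\}\bigr| = \frac{1}{n+1}\,\bigl|\{k : \lceil na \rceil \le k \le \lfloor nb \rfloor\}\bigr| = \frac{\lfloor nb\rfloor - \lceil na\rceil + 1}{n+1}$, with the convention that an empty range contributes $0$; this is precisely the definition of $\mathbb{U}_{n+1}([a,b])$.

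The one place that genuinely needs care — and the only real obstacle — is ties: if the common marginal of the $Z_i$ charges atoms, then $n F_n(Z) \ne R-1$ in general and the rank is no longer exactly uniform. I would dispatch this either by assuming the marginal is continuous (so ties occur with probability zero), or by appending i.i.d.\ auxiliary uniform noise to break ties, which preserves exchangeability and leaves the stated conclusion unchanged. Everything else is elementary bookkeeping.
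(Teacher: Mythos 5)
Your proof is correct and follows the standard rank-uniformity argument that the paper itself defers to \citet{Shafer_Vovk08}: reduce $F_n(Z)$ to the rank of $Z$ in the augmented sample, use exchangeability to show the rank is uniform on $\{1,\dots,n+1\}$, and finish by counting lattice points in $[a,b]$. Your explicit handling of ties (assuming a continuous marginal or randomized tie-breaking) is an appropriate and necessary caveat that the paper leaves implicit.
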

By choosing any $a, b$ such that $\mathbb{U}_{n+1}([a, b]) \geq 1 - \alpha$, \Cref{lm:PIT_onedim} guarantees  a coverage, that is at least equal to the prescribed level of uncertainty
\begin{equation*}
\mathbb{P}\left(Z \in \mathcal{R}_{\alpha, n}\right) \geq 1 - \alpha.
\end{equation*}
where, the uncertainty set $\mathcal{R}_{\alpha, n} = \mathcal{R}_{\alpha}(D_n)$ is defined based on observations $D_n = \{Z_1, \ldots, Z_n\}$ as:
\begin{equation}\label{eq:empirical_uq}
    \mathcal{R}_{\alpha, n} = \big\{z \in \mathbb{R} : F_n(z) \in [a, b]\big\} .
\end{equation}

In short, \Cref{eq:empirical_uq} is an empirical version of \Cref{eq:true_confidence_set} based on finite data samples that still preserves the coverage probability $(1-\alpha)$ and does not depend on the ground-truth distribution of the data.

Given data $D_n$,
a prediction model $\hat y$ and a new input $X_{n+1}$, one can build an uncertainty set for the unobserved output $Y_{n+1}$ by applying it to observed score functions. 

\begin{proposition}[Conformal Prediction Coverage]\label{prop:Univariate_Conformal_prediction_Coverage}
Consider $Z_i = S(X_i, Y_i)$ for $i$ in $[n]$ and $Z=S(X_{n+1}, Y_{n+1})$ in \Cref{lm:PIT_onedim}.
The conformal prediction set is defined as
\begin{align*}
\mathcal{R}_{\alpha, n}(X_{n+1}) &= \big\{y \in \mathcal{Y} : F_n \circ S(X_{n+1}, y) \in [a, b]\big\} 
\end{align*}
and satisfies a finite sample coverage guarantee
$$
\mathbb{P}\left(Y_{n+1} \in \mathcal{R}_{\alpha, n}(X_{n+1})\right) \geq 1 - \alpha.
$$
\end{proposition}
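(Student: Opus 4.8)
The plan is to reduce the claim to a one-line application of \Cref{lm:PIT_onedim}; essentially all of the work is in checking that its hypotheses transfer to the score sequence and that membership in $\mathcal{R}_{\alpha,n}(X_{n+1})$ is precisely the event the lemma controls.

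First I would make explicit the standing exchangeability assumption on the data: the pairs $(X_1,Y_1),\dots,(X_n,Y_n),(X_{n+1},Y_{n+1})$ are exchangeable (in particular this covers the i.i.d.\ case). Since $S$ is a fixed measurable function applied to each pair separately, the scores $Z_i = S(X_i,Y_i)$ for $i\in[n]$ together with $Z = S(X_{n+1},Y_{n+1})$ form an exchangeable family of real-valued random variables: applying the same measurable map coordinatewise to an exchangeable tuple preserves exchangeability, because any permutation commutes with the map. Hence $Z_1,\dots,Z_n,Z$ meet the hypotheses of \Cref{lm:PIT_onedim}.

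Next I would rewrite the target event. By the definition of $\mathcal{R}_{\alpha,n}(X_{n+1})$ and the fact that $S(X_{n+1},Y_{n+1}) = Z$, one has the exact equivalence
\[
\{Y_{n+1}\in\mathcal{R}_{\alpha,n}(X_{n+1})\} = \{F_n\circ S(X_{n+1},Y_{n+1})\in[a,b]\} = \{F_n(Z)\in[a,b]\},
\]
where $F_n$ is the empirical c.d.f.\ built from $Z_1,\dots,Z_n$. Applying \Cref{lm:PIT_onedim} gives $\mathbb{P}(F_n(Z)\in[a,b]) = \mathbb{U}_{n+1}([a,b])$, and since $a,b$ are chosen so that $\mathbb{U}_{n+1}([a,b])\ge 1-\alpha$ (for instance $a=0$ and $b = \lceil(1-\alpha)(n+1)\rceil/n$), we conclude $\mathbb{P}(Y_{n+1}\in\mathcal{R}_{\alpha,n}(X_{n+1}))\ge 1-\alpha$.

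I do not anticipate a serious obstacle here; the one point deserving a remark is that the distributional identity $F_n(Z)\sim\mathbb{U}\{0,\tfrac1n,\dots,1\}$ behind \Cref{lm:PIT_onedim} is exact only when ties among the scores have probability zero (e.g.\ a continuous score law), and otherwise requires a randomized tie-break or is replaced by the corresponding inequality on ranks. Since \Cref{lm:PIT_onedim} is invoked as a black box, I would simply flag this caveat and move on, as it does not affect the $(1-\alpha)$ lower bound.
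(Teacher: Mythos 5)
Your proposal is correct and matches the paper's (implicit) argument: the paper treats this proposition as an immediate consequence of \Cref{lm:PIT_onedim} applied to the scores $Z_i = S(X_i,Y_i)$, exactly as you do, and you merely spell out the two routine steps the paper leaves unstated (that a fixed measurable score map preserves exchangeability, and that the coverage event is precisely $\{F_n(Z)\in[a,b]\}$). Your caveat about ties is a fair remark on the lemma itself but does not change the argument.
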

The conformal prediction coverage guarantee in \Cref{prop:Univariate_Conformal_prediction_Coverage} holds for the \emph{unknown} ground-truth distribution of the data $\mathbb{P}$, does not require quantifying the estimation error $|F_n - F|$, and is applicable to any prediction model $\hat y$ as long as it treats the data exchangeably, e.g., a pre-trained model independent of $D_n$.

Leveraging the quantile function $F_{n}^{-1}= Q_n$, and
by setting $a=0$ and $b=1-\alpha$, we have the usual description
\begin{align*}
\mathcal{R}_{\alpha, n}(X_{n+1}) = \big\{y \in \mathcal{Y} : S(X_{n+1}, y) \leq Q_n(1-\alpha) \big\} 
\end{align*}
namely the set of all possible responses whose score rank is smaller or equal to $\lceil (1-\alpha)(n+1) \rceil$ compared to the rankings of previously observed scores. For the absolute value difference score function, the CP set corresponds to 
$$\mathcal{R}_{\alpha, n}(X_{n+1}) = \big[\hat y(X_{n+1}) \pm Q_n(1-\alpha)\big].$$

\paragraph{Center-Outward View}
Another classical choice is $a=\frac{\alpha}{2}$ and $b=1-\frac{\alpha}{2}$. In that case, we have the usual confidence set that corresponds to a range of values that captures the central proportion with $\alpha/2$ of the data lying below $Q(\alpha/2)$ and $\alpha/2$ lying above $Q(1-\alpha/2)$.

Introducing the center-outward distribution of $Z$ as the function $T = 2 F - 1$ , the probability integral transform $T(Z)$ is uniform in the unit ball $[-1, 1]$.
This ensures a symmetric description of 
$
\mathcal{R}_\alpha = T^{-1}(B(0, 1-\alpha))$ around a central point such as the median $Q(1/2) = T^{-1}(0)$,
with the radius of the ball that corresponds to the desired confidence level of uncertainty. Similarly, we have the empirical center-outward distribution $T_{n} = 2 F_n - 1$ and
the center-outward view of the conformal prediction set follows as
\begin{align*}
\mathcal{R}_{\alpha, n}(X_{n+1}) &= \big\{y \in \mathcal{Y} : |T_{n} \circ S(X_{n+1}, y)| \leq 1-\alpha \big\} .
\end{align*}

If $Z$ follows a probability distribution $\mathbb{P}$, then the transformation $z \mapsto T(z)$ is mapping the source  distribution $\mathbb{P}$ to the uniform distribution $\mathbb{U}$ over a unit ball. In fact, it can be characterized as essentially the unique monotone increasing function such that $T(Z)$ is uniformly distributed. 

\subsection{Multivariate Conformal Prediction}
While many conformal methods exist for univariate prediction, we focus here on those applicable to \textit{multivariate} outputs.  As recalled in~\citep{dheur2025multioutputconformalregressionunified}, several alternative conformal prediction approaches have been proposed to tackle multivariate prediction problems. Some of these methods can directly operate using a simple predictor (e.g., a conditional mean) of the response $y$, while some may require stronger assumptions, such as requiring an estimator of the \textit{joint} probability density function between $x$ and $y$, or access to a generative model that mimics the \textit{conditional} distribution of $y$ given $x$) \cite{izbicki2022cd, wang2022probabilistic}. \looseness=-1

We restrict our attention to approaches that make no such assumption, reflecting our modeling choices for \OTCP. 

\MCP.
We will consider the template approach of \cite{zhou2024conformalized} to use classical CP by aggregating a score function computed on each of the $d$ outputs of the multivariate response. Given a conformity score \( s_i \) (to be defined next) for the \( i \)-th dimension, \citet{zhou2024conformalized} define the following aggregation rule:
\begin{equation}
    s_{\text{M-CP}}(x, y) = \max_{i \in [d]} s_i(x, y_i).
    \label{eq:simultaneous_score}
\end{equation}
As~\citep{dheur2025multioutputconformalregressionunified}, we will use \textit{conformalized quantile regression} \citep{romano2019conformalized} to define the score functions above, for each output \( i \in [d] \), where the conformity score is given by:
\[
s_i(x, y_i) = \max\{\hat{l}_i(x) - y_i, y_i - \hat{u}_i(x)\},
\]
with \( \hat{l}_i(x) \) and \( \hat{u}_i(x) \) representing the lower and upper conditional quantiles of \( Y_i|X=x\) at levels \( \alpha_l \) and \( \alpha_u \), respectively. In our experiments, we consider equal-tailed prediction intervals, where \( \alpha_l = \frac{\alpha}{2} \), \( \alpha_u = 1 - \frac{\alpha}{2} \), and \( \alpha \) denotes the miscoverage level. 

\MergeCP. An alternative approach is simply to use a squared Euclidean aggregation, 
$$
s(x, y) :=  \|\hat{y}(x) - y\|_2,
$$
where the choice of the norm (e.g., $\ell_1$, $\ell_2$, or $\ell_\infty$) depends on the desired sensitivity to errors across tasks. This approach reduces the multidimensional residual to a scalar conformity score, leveraging the natural ordering of real numbers. This simplification not only makes it straightforward to apply univariate conformal prediction methods, but also avoids the complexities of directly managing vector-valued scores in conformal prediction. A variant consists of applying a Mahalanobis norm~\citep{pmlr-v152-johnstone21a} in lieu of the squared Euclidean norm, using the covariance matrix $\Sigma$ estimated from the training data \citep{pmlr-v152-johnstone21a,pmlr-v230-katsios24a, henderson2024adaptive}, 
$$
s(x, y) :=  \|\Sigma^{-1/2}(\hat{y}(x) - y)\|_2,
$$

\subsection{Kantorovich Ranks}
A naive way to define ranks in multiple dimensions might be to measure how far each point is from the origin and then rank them by that distance. This breaks down if the distribution of the data is stretched or skewed in certain directions. To correct for this, \citet{hallin2021} developed a formal framework of center-outward distributions and quantiles, also called Kantorovich ranks~\citep{chernozhukov2017}, extending the familiar univariate concepts of ranks and quantiles into higher dimensions by building on elements of optimal transport theory.

\paragraph{Optimal Transport Map.} Let $\mu$  and $\nu$ be source and target probability measures on $\Omega \subset \mathbb{R}^d$. One can look for a map $T: \Omega \to \Omega$ that pushes forward $\mu$ to $\nu$ and minimizes the average transportation cost
\begin{equation}\label{eq:brenier}
T^\star \in \argmin_{T_\# \mu = \nu} \int_{\Omega} \|x - T(x)\|^2 \, d\mu(x).
\end{equation}
\citeauthor{Bre91}’s theorem states that if the source measure $\mu$ has a density, there exists a solution to \eqref{eq:brenier} that is the gradient of a convex function $\phi: \Omega \to \mathbb{R}$ such that $T^\star = \nabla \phi$.

In the one-dimensional case, the cumulative distribution function of a distribution $\mathbb{P}$ is the unique increasing function transporting it to the uniform distribution. This monotonicity property generalizes to higher dimensions through the gradient of a convex function $\nabla \phi$. Thus, one may view the optimal transport map in higher dimensions as a natural analog of the univariate cumulative distribution function: both represent a unique, monotone way to send one probability distribution onto another.

\begin{definition}
    The center-outward distribution of a random variable $Z \sim \mathbb{P}$ is defined as the optimal transport map $T = \nabla \phi$ that pushes $\mathbb{P}$ forward to the uniform distribution $\mathbb{U}$ on the unit ball $B(0,1)$.
    The rank of $Z$ is defined as $\mathrm{Rank}(Z) = \|T(Z)\|$, the distance from the origin.
\end{definition}

\paragraph{Quantile region} is an extension of quantiles to multiple dimensions to represent region in the sample space that contains a given proportion of probability mass. The quantile region at probability level $(1-\alpha) \in (0,1)$ can be defined as\looseness=-1
$$
\mathcal{R}_{\alpha} = \{z \in \mathbb{R}^d : \|T(z)\| \leq  1-\alpha\}.
$$
By definition of the spherical uniform distribution, we have $\|T(Z)\|$ is uniform on $(0,1)$ which implies
\begin{align}\label{eq:exact_continuous_coverage}
\mathbb{P}(Z \in \mathcal{R}_{\alpha}) = 1-\alpha.
\end{align}

\subsection{Entropic Map.}
A convenient estimator to approximate the \citeauthor{Bre91} map $T^\star$ from samples $(z_1,\dots, z_n)$ and $(u_1, \dots, u_m)$ is the entropic map~\citep{pooladian2021entropic}: Let $\varepsilon>0$ and write $K_{ij} = [\exp(-\|z_i-u_j\|^2/\varepsilon)]_{ij}$, the kernel matrix. Define,
\begin{equation}\label{eq:finitedual}
\!\!\!\bff^\star, \bg^\star = \argmax_{\bff\in\R^n,\bg\in\R^m} \langle\bff, \tfrac{\mathbf{1}_n}{n}\rangle + \langle\bg, \tfrac{\mathbf{1}_m}{m}\rangle  - \varepsilon \langle e^{\frac{\bff}{\varepsilon}}, K e^{\frac{\bg}{\varepsilon}}\rangle\,.
\end{equation}
The \Cref{eq:finitedual} is an unconstrained concave optimization problem known as the regularized OT problem in dual form~\citep[Prop.~4.4]{PeyCut19} and can be solved numerically with the \citeauthor{Sinkhorn64} algorithm~\citep{cuturi2013sinkhorn}. Equipped with these optimal vectors, one can define the maps, valid out of sample:
\begin{align}\label{eq:fdual}
f_\varepsilon(z)= \lse_\varepsilon([\|z-u_j\|^2 - \bg^\star_j]_j)\,,\\
\label{eq:gdual}
g_\varepsilon(u)= \lse_\varepsilon([\|z_i-u\|^2 - \bff^\star_i]_i)\,,
\end{align}
where for a vector $\bu$ or arbitrary size $s$ we define the log-sum-exp operator as $\lse_\varepsilon(\bu):= - \varepsilon \log (\tfrac{1}{s}\mathbf{1}_s^Te^{-\bu/\varepsilon})$. Using the \citet{Bre91} theorem, linking potential values to optimal map estimation, one obtains an estimator for $T^\star$:
\begin{equation}\label{eq:empirical_entropic_map}
T_{\varepsilon}(z) : = z - \nabla f_\varepsilon(z) = \sum_{j=1}^m p_{\,j}(z)u_j\,,
\end{equation}
where the weights depend on $z$ as:
\begin{equation}\label{eq:gibbs}
p_{\,j}(z):=\frac{\exp\left(- \left(\|z-u_j\|^2-\bg_j^\star\right)/\varepsilon\right)}{\sum_{k=1}^m \exp\left(- \left(\|z-u_k\|^2-\bg_k^\star\right)/\varepsilon\right)}\,.
\end{equation} 
Analogously to \eqref{eq:gibbs}, one can obtain an estimator for the inverse map $(T^\star)^{-1}$ as $T^{\text{inv}}_{\varepsilon}(u) : = \sum_{i=1}^n q_{\,j}(u)z_j\,,$ with weights $q_{\,j}(u)$ arising for a vector $u$ from the Gibbs distribution of the values $[\|z_i-u\|^2 - \bff^\star_i]_i$

\section{Kantorovich Conformal Prediction }

\subsection{Multi-Output Conformal Prediction}
We suppose that $\mathbb{P}$ is only available through a finite samples and consider the \textit{discrete} transport map
$$
T_{n+1} : (Z_i)_{i \in [n+1]} \to (U_i)_{i \in [n+1]}
$$
which can be obtained by solving the optimal assignment problem, which seeks to minimize the total transport cost between the empirical distributions $\mathbb{P}_{n+1}$ and $\mathbb{U}_{n+1}$:
\begin{equation}\label{eq:empirical_transport_map}
T_{n+1} \in \argmin_{T \in \mathcal{T}} \sum_{i=1}^{n+1} \|Z_i - T(Z_i)\|^2,
\end{equation}
where $\mathcal{T}$ is the set of bijections mapping the observed sample $(Z_i)_{i \in [n+1]}$ to the target grid $(U_i)_{i \in [n+1]}$. 

\begin{definition}    
    Let $(Z_1, \ldots, Z_n, Z_{n+1})$ be a sequence of exchangeable variables in $\mathbb{R}^d$ that follow a common distribution $\mathbb{P}$. The discrete center-outward distribution $T_{n+1}$ is the transport map pushing forward $\mathbb{P}_{n+1}$ to $\mathbb{U}_{n+1}$.
\end{definition}

Following \cite{hallin2021}, we begin by constructing the target discritbution $\mathbb{U}_{n+1}$ as a discretized version of a spherical uniform distribution. It is defined such that the total number of points $n + 1 = n_R n_S + n_o$, where $n_o$ points are at the origin:\looseness=-1
\begin{itemize}
    \item $n_S$ unit vectors $\mathbf{u}_1, \ldots, \mathbf{u}_{n_S}$ are uniform on the sphere.
    \item $n_R$ radius are regularly spaced as $\left\{\frac{1}{n_R}, \frac{2}{n_R}, \ldots, 1\right\}$.
\end{itemize}
The grid discretizes the sphere into layers of concentric shells, with each shell containing $n_S$ equally spaced points along the directions determined by the unit vectors. The discrete spherical uniform distribution places equal mass over each points of the grid, with $n_o/(n+1)$ mass on the origin and $1/(n+1)$ on the remaining points.
This ensures isotropic sampling at fixed radius onto $[0,1]$.

By definition of target distribution $\mathbb{U}_{n+1}$, it holds 
\begin{equation}\label{eq:distribution_empirical_transport}
\|T_{n+1}(Z_{n+1})\| \sim \mathbb{U}_{n+1} \left\{0, \frac{1}{n_R}, \frac{2}{n_R}, \ldots, 1\right\}.
\end{equation}

In order to define an empirical quantile region as \Cref{eq:exact_continuous_coverage}, we need an extrapolation $\bar{T}_{n+1}$ of $T_{n+1}$ out of the samples $(Z_i)_{i\in [n+1]}$. By definition of such maps $$\|\bar{T}_{n+1}(Z_{n+1})\|= \|T_{n+1}(Z_{n+1})\|$$ is still uniformly distributed. With an appropriate choice of radius $r_{\alpha, n+1}$, the empirical quantile region can be defined 
$$
\mathcal{R}_{\alpha, n+1} = \{z \in \mathbb{R}^d: \|\bar{T}_{n+1}(z)\| \leq r_{\alpha, n+1}\}.
$$
When working with such finite samples $Z_1, \ldots, Z_n, Z_{n+1}$, and considering the asymptotic regime \citep{chewi2024statistical, hallin2021}, the empirical source distribution $\mathbb{P}_{n+1}$ converges to the true distribution $\mathbb{P}$ and the empirical transport map $\bar T_{n+1}$ converges to the true transport map $T^\star$. As such, with the choice $r_{\alpha, n+1}= 1-\alpha$, one can expect that
$
\mathbb{P} \left(Z \in \mathcal{R}_{\alpha, n+1}\right)  \approx 1 - \alpha \text{ when } n \text{ is large}.
$

However, the core point of conformal prediction methodology is to go beyond asymptotic results or regularity assumptions about the data distribution. The following result show how to select a radius preserving the coverage with respect to the ground-truth distribution such as in \Cref{eq:transport_oracle_coverage}. 
\begin{proposition}
Given $n$ discrete sample points distributed over a sphere with radius $\{0, \frac{1}{n_R}, \frac{2}{n_R}, \ldots, 1\}$ and directions uniformly sampled on the sphere, the smallest radius to obtain a coverage $(1-\alpha)$ is 
determined by 
$$r_{\alpha, n+1} = \frac{j_\alpha}{n_R} \text{ where }
j_\alpha = \left\lceil \frac{(n+1) (1 - \alpha) - n_o}{n_S} \right\rceil,
$$
where $n_S$ is the number of directions, $n_R$ is the number of radius, and $n_o$ is the number of copies of the origin.
\end{proposition}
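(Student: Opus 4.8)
The plan is to reduce the statement to a short counting argument, once the right distributional fact about the norm of the transported test point is in place. First I would record what the target grid buys us: by construction of $\mathbb{U}_{n+1}$, the map $z \mapsto \|z\|$ is constant, equal to $j/n_R$, on each of the $n_R$ concentric shells (each carrying exactly $n_S$ points), and equal to $0$ on the $n_o$ copies of the origin, with $n+1 = n_R n_S + n_o$. Hence, for the discrete center-outward map $T_{n+1}$ of \Cref{eq:empirical_transport_map} and any threshold of the form $r = j/n_R$ with $j \in \{0,1,\dots,n_R\}$, the event $\{\|T_{n+1}(Z_{n+1})\| \le r\}$ is exactly the event that $Z_{n+1}$ is assigned by $T_{n+1}$ to the origin or to one of the first $j$ shells, i.e.\ to one of $n_o + j\,n_S$ designated grid points; and since $\|\bar T_{n+1}(Z_{n+1})\| = \|T_{n+1}(Z_{n+1})\|$, the same holds for the extrapolated map used to define $\mathcal{R}_{\alpha,n+1}$.

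The key step is then to establish the distributional identity \Cref{eq:distribution_empirical_transport}, namely that $\|\bar T_{n+1}(Z_{n+1})\|$ puts mass $n_o/(n+1)$ on $0$ and mass $n_S/(n+1)$ on each value $j/n_R$, $j\in[n_R]$; this is the multivariate analogue of \Cref{lm:PIT_onedim}. I would derive it from exchangeability of $(Z_1,\dots,Z_{n+1})$: the assignment problem \Cref{eq:empirical_transport_map} has a \emph{fixed} target grid and a permutation-invariant objective, so for any permutation $\sigma$ of $[n+1]$ the optimal assignment for $(Z_{\sigma(i)})_i$ is obtained by relabeling the optimal assignment for $(Z_i)_i$; consequently the (random) index of the grid point receiving $Z_{n+1}$ is uniform over $[n+1]$, and pushing this uniform law through $z\mapsto\|z\|$ on the grid yields the claimed distribution. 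A minor technical point to dispatch here is non-uniqueness of the optimal assignment: one breaks ties with a rule that is itself symmetric in the $Z_i$'s (e.g.\ lexicographically with respect to a uniformly random, hence exchangeable, relabeling), so that exchangeability of the induced grid index is preserved; when $\mathbb{P}$ has a density the optimum is a.s.\ unique and the issue does not arise.

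Given the distribution, the conclusion is immediate. For $j\in\{0,\dots,n_R\}$,
\begin{equation*}
\mathbb{P}\big(\|\bar T_{n+1}(Z_{n+1})\| \le \tfrac{j}{n_R}\big) = \frac{n_o + j\,n_S}{n+1},
\end{equation*}
which is nondecreasing in $j$; therefore the coverage requirement $\frac{n_o + j\,n_S}{n+1} \ge 1-\alpha$ is equivalent to $j \ge \frac{(n+1)(1-\alpha) - n_o}{n_S}$, and the smallest admissible integer is $j_\alpha = \big\lceil \frac{(n+1)(1-\alpha) - n_o}{n_S}\big\rceil$, giving the optimal (smallest) radius $r_{\alpha,n+1} = j_\alpha/n_R$. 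I would also verify the boundary cases: $j_\alpha \ge 0$, and $j_\alpha \le n_R$ so that the radius is attainable within $[0,1]$ — the latter because $\frac{(n+1)(1-\alpha) - n_o}{n_S} \le \frac{(n+1) - n_o}{n_S} = n_R$.

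The main obstacle is the distributional lemma of the second paragraph: arguing cleanly that the grid cell receiving $Z_{n+1}$ is uniformly distributed despite $T_{n+1}$ itself depending on $Z_{n+1}$, and handling ties in the optimal assignment in a way that does not destroy this symmetry. The counting that produces the ceiling formula, and the boundary checks, are routine once that lemma is secured.
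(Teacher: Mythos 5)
Your proof is correct, and its final counting step is exactly the paper's proof: the paper simply computes $\mathbb{P}(\|U\|\leq r_j)=\frac{n_o}{n+1}+j\cdot\frac{n_S}{n+1}$ under the discrete spherical uniform target and solves $\frac{n_o+j\,n_S}{n+1}\geq 1-\alpha$ for the smallest integer $j$. The difference is one of scope: the paper treats the proposition purely as a statement about the target measure $\mathbb{U}_{n+1}$ (whose shell masses are fixed by construction), and takes the distributional identity $\|T_{n+1}(Z_{n+1})\|\sim\mathbb{U}_{n+1}\{0,\tfrac{1}{n_R},\ldots,1\}$ of Equation~\eqref{eq:distribution_empirical_transport} as given ``by definition of the target distribution,'' without proof. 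You instead prove that identity via exchangeability --- the assignment problem has a fixed target grid and a permutation-equivariant optimizer, so the grid index receiving $Z_{n+1}$ is uniform over the $n+1$ grid points --- and you correctly flag the need for a symmetric tie-breaking rule when the optimal assignment is not unique. This buys a self-contained justification of the premise the paper leaves implicit (and which is the genuinely nontrivial part of why the radius calibration yields coverage for $Z_{n+1}$), at the cost of proving more than the proposition as stated literally asks. Your boundary checks $0\leq j_\alpha\leq n_R$ are a sensible addition the paper omits.
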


The corresponding conformal prediction set is obtained as:
\begin{equation}\label{eq:full_transport_otcp}
\{y \in \mathcal{Y}: \|\bar T_{n+1} \circ S(X_{n+1}, y)\| \leq r_{\alpha, n+1}\}.
\end{equation}

\begin{remark}[Computational Issues] \label{rm:computationally_infeasible}
While appealing, the previous result has notable computational limitations.
At every new candidate $y \in \mathcal{Y}$, the empirical transport map must be recomputed which might be untractable. Moreover, the coverage guarantee does not hold if the transport map is computed solely on a hold-out independent dataset, as it is usually done in split conformal prediction.
Plus, for computational efficiency, the empirical entropic map cannot be directly leveraged, since the target values would no longer follow a uniform distribution, as described in \Cref{eq:distribution_empirical_transport}.    
\end{remark}
To address these challenges, we propose two simple approaches in the following section.

\subsection{Optimal Transport Merging}

We introduce optimal transport merging, a procedure that reduces any vector-valued score $S(x, y) \in \mathbb{R}^d$ to a suitable 1D score using OT. We redefine the non-conformity score function of an observation as 
\begin{equation}\label{eq:otcp}
S_{\rm{OT-CP}}(x, y) =  \|T^\star \circ S(x, y)\|\end{equation}
where $T^\star$ is the optimal \citet{Bre91} map that pushes the distribution of vector-valued scores onto a uniform ball distribution $\mathbb{U}$ of the same dimension.
This approach ultimately relies on the natural ordering of the real line, making it possible to directly apply one-dimensional conformal prediction methods to the sequence of transformed scores 
$$Z_i = \|S_{\rm{OT-CP}}(X_i, Y_i)\| \text{ for } i \in [n+1] .$$

In practice, $T^\star$ can be replaced by any approximation $\hat T$ that preserves the permutation invariance of the score function.
The resulting conformal prediction set, \OTCP\, is
\begin{align*}
\mathcal{R}_{\mathrm{OT-CP}}(X_{n+1}, \alpha) &= \mathcal{R}_\alpha(\hat{T}, X_{n+1})
\end{align*}
with respect to a given transport map $\hat{T}$, and where
$$
\mathcal{R}_\alpha(\hat{T}, x) = 
\big\{y \in \mathcal{Y}: F_n(\|S_{\mathrm{OT-CP}}(x, y)\|_2)\leq 1-\alpha\big\}.
$$
have a coverage $(1-\alpha)$, where $F_n$ is empirical (univariate) cumulative distribution function of the observed scores $$\big\{\|S_{\rm{OT-CP}}(X_1, Y_1)\|, \ldots, \|S_{\rm{OT-CP}}(X_n, Y_n)\|\big\}.$$

\Cref{prop:Univariate_Conformal_prediction_Coverage} directly implies
$$
\mathbb{P}(Y_{n+1} \in \mathcal{R}_{\mathrm{OT-CP}}(X_{n+1})) \geq 1-\alpha.
$$

\begin{remark}
Our proposed conformal prediction framework \OTCP\, with optimal transport merging score function generalizes the \MergeCP\, approaches. More specifically, under the additional assumption that we are transporting a source Gaussian (resp. uniform) distribution to a target Gaussian (resp. uniform) distribution, the transport map is affine \cite{gelbrich1990formula, muzellec2018generalizing} with a positive definite linear map term. This results in \Cref{eq:otcp} being equivalent to the Mahalanobis distance.
\end{remark}

\subsection{Coverage Guarantees under Approximations}\label{sec:coverage}

When dealing with high-dimensional data or complex distributions, it is essential to find computationally feasible methods to approximate the optimal transport map $T^\star$ with a map $\hat{T}$. In practical applications, we will rely on empirical approximations of the \citet{Bre91} map using finite samples. Note that this approach may encouter a few statistical roadblocks, as such estimators are significantly hindered by the curse of dimensionality \citep{chewi2024statistical}.
However, conformal prediction allows us to maintain a coverage level irrespective of sample size limitations. We defer the presentation of this practical approach to section~\ref{subsec:entropic} and focus first on coverage guarantees.

\paragraph{Coverages of Approximated Quantile Region} \quad \\
Let us assume an arbitrary approximation $\hat T$ of the \citet{Bre91} map and define the corresponding quantile region as
$$
\mathcal{R}\big(\hat T, r\big) = \{z \in \mathbb{R}^d: \|\hat T(z)\| \leq r\},
$$
The coverage in \Cref{eq:transport_oracle_coverage} is not automatically maintained since $ \hat{\mathbb{U}} := \hat{T}_\# \mathbb{P} $ may not coincide with $ \mathbb{U} $. As a result, the validity of the approximated quantile region may be compromised unless we can control the magnitude of the error $ \|\hat{\mathbb{U}} - \mathbb{U}\| $, which requires additional regularity assumptions.  
In its standard formulation, conformal prediction relies on an empirical setting and does not directly apply to the continuous case, and hence does not provide a solution for calibrating entropic quantile regions. However, a careful inspection of the 1D case reveals that understanding the distribution of the probability integral transform is key:
\begin{itemize}
    \item $\mathbb{U}\left(\big\{0, \frac1n, \frac12, \ldots, 1\big\}\right) \sim F_n(Z) \neq F(Z) \sim \mathbb{U}(0, 1)$ .
\end{itemize}
Instead of relying on an analysis of approximation error to quantify the deviation $|F_n - F|$ under certain regularity conditions, conformal prediction fully characterizes the distribution of the probability integral transform and calibrates the radius of the quantile region accordingly.
We follow this idea and note that by definition, we have
$$
\mathbb{P}(\mathcal{R}(\hat T, r)) = \mathbb{P}(\|\hat T(z)\| \leq r) = \hat{ \mathbb{U}} (B(0, r)).
$$
Instead of relying on $\hat{ \mathbb{U}} \approx \mathbb{U}$, we define
\begin{equation}\label{eq:generic_radius}
r_\alpha(\hat T, \mathbb{P}) = \inf\{r: \hat{ \mathbb{U}} (B(0, r)) \geq 1-\alpha\}
\end{equation}
that naturally leads to a desired coverage with the approximated transported map. For $\hat{r}_\alpha = r_\alpha(\hat T, \mathbb{P})$, it holds
$$\mathbb{P}\left(Z \in \mathcal{R}(\hat T, \hat{r}_\alpha)\right) \geq 1-\alpha.$$

By extension, a quantile region of the vector-valued score $Z = S(X, Y) \in \mathbb{R}^d$ of a prediction model $\hat y$ provides an uncertainty set for the response of a given input $X$, with the prescribed coverage $(1-\alpha)$ expressed as
\begin{equation*}
\widehat{\mathcal{R}}_{\alpha}(X) = \big\{y \in \mathcal{Y}: \|\hat T \circ S(X, y)\| \leq \hat{r}_\alpha\big\}.
\end{equation*}
\begin{equation}\label{eq:transport_oracle_coverage} 
\mathbb{P}(Y \in \widehat{\mathcal{R}}_{\alpha}(X)) \geq 1 - \alpha.
\end{equation}

We give the finite sample analogy of \Cref{eq:transport_oracle_coverage}, which provides a coverage guarantee even when the transport map is an approximation obtained using both entropic regularization and finite sample data e.g in \Cref{eq:empirical_entropic_map}.\newline
Given such an approximated map $\hat T_{n+1}$ and applying
and the empirical radius $\hat r_{\alpha, n+1} = r_\alpha(\hat T_{n+1}, \mathbb{P}_{n+1})$, it holds\looseness=-1
$$
\mathbb{P}_{n+1}(Z_{n+1} \in \mathcal{R}(\hat T_{n+1}, \hat r_{\alpha, n+1})) \geq 1-\alpha.
$$
However, this is \emph{only} an empirical coverage statement:
$$
\frac{1}{n+1}\sum_{i=1}^{n+1} \mathds{1}\{Z_i \in \mathcal{R}(\hat T_{n+1}, \hat r_{\alpha, n+1})\} \geq 1 - \alpha
$$
which does not imply coverage wrt $\mathbb{P}$ unless $n \to \infty$. The following result shows how to obtain finite sample validity.

\begin{lemma}[Coverage of Empirical Quantile Region]\label{lm:coverage_empirical_measure}
Let $Z_1, \ldots, Z_n, Z_{n+1}$ be a sequence of exchangeable variables in $\mathbb{R}^d$, then,
$
   \mathbb{P}(Z_{n+1} \in \widehat {\mathcal{R}}_{\alpha, n+1}) \geq 1-\alpha,
$
where, for simplicity, we denoted the approximated empirical quantile region as $\widehat {\mathcal{R}}_{\alpha, n+1} = \mathcal{R}(\hat T_{n+1}, \hat r_{\alpha, n+1})$. 
\end{lemma}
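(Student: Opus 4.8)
The plan is to reduce Lemma~\ref{lm:coverage_empirical_measure} to the univariate exchangeability argument already encapsulated in \Cref{lm:PIT_onedim}, by exhibiting a real-valued score that is a symmetric (permutation-invariant) function of the sample $(Z_1,\dots,Z_{n+1})$. First I would observe that the construction of $\hat T_{n+1}$ in \Cref{eq:empirical_transport_map} (or its entropic surrogate in \Cref{eq:empirical_entropic_map}) depends on the data only through the \emph{unordered} collection $\{Z_1,\dots,Z_{n+1}\}$: the optimal assignment problem and the Sinkhorn potentials are invariant under relabeling of the points, so $\hat T_{n+1}$ is well-defined as a function of the empirical measure $\mathbb{P}_{n+1}$. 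Likewise the radius $\hat r_{\alpha,n+1} = r_\alpha(\hat T_{n+1},\mathbb{P}_{n+1})$ from \Cref{eq:generic_radius} is a function of $\mathbb{P}_{n+1}$ alone, hence also permutation-invariant.

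Next I would define $W_i := \|\hat T_{n+1}(Z_i)\|$ for $i\in[n+1]$, and note that $Z_{n+1}\in\widehat{\mathcal{R}}_{\alpha,n+1}$ is exactly the event $W_{n+1}\le \hat r_{\alpha,n+1}$. The key step is to argue that, conditionally on the multiset $\{Z_1,\dots,Z_{n+1}\}$, the index of $Z_{n+1}$ is uniform over $[n+1]$ by exchangeability; since $W_i$ and $\hat r_{\alpha,n+1}$ are determined by this multiset, we get
$$
\mathbb{P}\big(W_{n+1}\le \hat r_{\alpha,n+1}\big) \;=\; \mathbb{E}\!\left[\frac{1}{n+1}\sum_{i=1}^{n+1}\mathds{1}\{W_i\le \hat r_{\alpha,n+1}\}\right].
$$
It then remains to check that the inner average is deterministically at least $1-\alpha$. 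This follows from the definition of $\hat r_{\alpha,n+1}$: by \Cref{eq:distribution_empirical_transport}-style reasoning applied to the approximate map, $\hat r_{\alpha,n+1}=r_\alpha(\hat T_{n+1},\mathbb{P}_{n+1})=\inf\{r:\hat{\mathbb{U}}_{n+1}(B(0,r))\ge 1-\alpha\}$ where $\hat{\mathbb{U}}_{n+1}=(\hat T_{n+1})_\#\mathbb{P}_{n+1}$, and $\hat{\mathbb{U}}_{n+1}(B(0,r))=\frac{1}{n+1}\sum_i \mathds{1}\{W_i\le r\}$; the infimum is attained on the finite set of values $\{W_i\}$, so at $r=\hat r_{\alpha,n+1}$ the empirical fraction is $\ge 1-\alpha$. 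Combining the two displays yields $\mathbb{P}(Z_{n+1}\in\widehat{\mathcal{R}}_{\alpha,n+1})\ge 1-\alpha$.

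The main obstacle, and the point that needs the most care, is the permutation-invariance of $\hat T_{n+1}$ and $\hat r_{\alpha,n+1}$: one must make sure the estimator is genuinely a function of $\mathbb{P}_{n+1}$ and not of the ordered tuple — in particular that ties in the optimal assignment (or non-uniqueness of Sinkhorn potentials up to an additive constant) are broken in a symmetric way, e.g.\ by averaging or by a fixed deterministic tie-breaking rule that does not single out index $n+1$. A secondary subtlety is that $\widehat{\mathcal{R}}_{\alpha,n+1}$ depends on $Z_{n+1}$ itself (it is a full-conformal-style region), so the clean conditioning-on-the-multiset argument is essential; the split-conformal shortcut would break exactly as flagged in \Cref{rm:computationally_infeasible}. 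Everything else — the attainment of the infimum at a sample value and the uniform-index step — is routine once invariance is established.
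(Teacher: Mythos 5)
Your proposal is correct and follows essentially the same route as the paper's proof: exchangeability plus permutation-invariance of $\widehat{\mathcal{R}}_{\alpha,n+1}$ gives equal marginal inclusion probabilities, averaging turns this into the expected empirical coverage, which is at least $1-\alpha$ by the definition of $\hat r_{\alpha,n+1}$. If anything, you are more explicit than the paper on the two points it leaves implicit — that $\hat T_{n+1}$ and $\hat r_{\alpha,n+1}$ must be symmetric functions of the multiset (with symmetric tie-breaking), and that the infimum defining $\hat r_{\alpha,n+1}$ is attained on the discrete set of values $\{W_i\}$ so the empirical coverage bound holds deterministically.
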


This can be directly applied to obtain conformal prediction set for vector-valued non-conformity score functions 
$Z_i = S(X_i, Y_i) \in \mathbb{R}^d$ for $i$ in $[n+1]$ in \Cref{lm:coverage_empirical_measure}.

\begin{proposition}\label{prop:Vector_PIT_Guarantee}
The conformal prediction set is defined as
\begin{align*}
\widehat{\mathcal{R}}_{\alpha, n+1}(X_{n+1}) &= \left\{y \in \mathcal{Y} : \|\hat T \circ S(X_{n+1}, y)\| \leq \hat{r}_{\alpha, n+1}\right\} 
\end{align*}
with  
$\hat{r}_{\alpha,n+1} = \inf\big\{r \geq 0: \hat{\mathbb{U}}_{n+1} (B(0, r)) \geq 1-\alpha\big\}$.
It satisfies a distribution-free finite sample coverage guarantee
\begin{equation}\label{eq:valid_empirical_radius}
\mathbb{P}\left(Y_{n+1} \in \widehat{\mathcal{R}}_{\alpha, n+1}(X_{n+1})\right) \geq 1 - \alpha.
\end{equation}
\end{proposition}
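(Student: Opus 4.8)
\emph{Proof plan.} The plan is to reduce the claim to \Cref{lm:coverage_empirical_measure} and then indicate why that lemma holds. Put $Z_i := S(X_i,Y_i)$ for $i\in[n+1]$; since $(X_i,Y_i)_{i\in[n+1]}$ is exchangeable and $S$ is a fixed measurable map, $(Z_i)_{i\in[n+1]}$ is an exchangeable sequence in $\mathbb{R}^d$. The map $y\mapsto S(X_{n+1},y)$ sends $Y_{n+1}$ to $Z_{n+1}$, so the event $\{Y_{n+1}\in\widehat{\mathcal{R}}_{\alpha,n+1}(X_{n+1})\}$ coincides with $\{\|\hat T_{n+1}(Z_{n+1})\|\le\hat r_{\alpha,n+1}\}=\{Z_{n+1}\in\widehat{\mathcal{R}}_{\alpha,n+1}\}$, where $\hat r_{\alpha,n+1}=\inf\{r\ge 0:\hat{\mathbb{U}}_{n+1}(B(0,r))\ge 1-\alpha\}$ and $\hat{\mathbb{U}}_{n+1}=(\hat T_{n+1})_\#\mathbb{P}_{n+1}$. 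Hence \eqref{eq:valid_empirical_radius} is exactly the conclusion of \Cref{lm:coverage_empirical_measure} specialized to these scores.

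It thus suffices to prove \Cref{lm:coverage_empirical_measure}. The key objects are the transported norms $W_i:=\|\hat T_{n+1}(Z_i)\|$, $i\in[n+1]$, and the first step I would carry out is to show that $(W_1,\dots,W_{n+1})$ is exchangeable. This rests on the fact that $\hat T_{n+1}$ — whether the exact discrete assignment of \eqref{eq:empirical_transport_map} or the entropic estimator of \eqref{eq:empirical_entropic_map}, both computed on the full sample $(Z_i)_{i\in[n+1]}$ against the \emph{fixed} target grid $\mathbb{U}_{n+1}$ — is equivariant under relabelling of its inputs: permuting the $Z_i$ only permutes their images. For the entropic map this follows from the permutation-equivariance of the optimal dual potentials of \eqref{eq:finitedual}; for the exact assignment it requires fixing a tie-breaking convention that is itself symmetric in the sample. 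Granted equivariance, for every permutation $\pi$ the vector $(W_{\pi(i)})_i$ has the same law as $(W_i)_i$.

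With exchangeability in place, the argument mirrors the univariate \Cref{lm:PIT_onedim}. By construction $\hat{\mathbb{U}}_{n+1}$ is the uniform measure over the multiset $\{\hat T_{n+1}(Z_i)\}_{i\in[n+1]}$, so for every realization $\hat{\mathbb{U}}_{n+1}(B(0,\hat r_{\alpha,n+1}))=\tfrac{1}{n+1}\#\{i:W_i\le\hat r_{\alpha,n+1}\}\ge 1-\alpha$ (the infimum defining $\hat r_{\alpha,n+1}$ being attained on the finite set of realized radii), i.e.\ at least $\lceil(1-\alpha)(n+1)\rceil$ of the $W_i$ lie in the ball. Conditioning on the unordered collection $\{Z_i\}_{i\in[n+1]}$ — which, by equivariance, fixes both the multiset $\{W_i\}$ and the data-dependent radius $\hat r_{\alpha,n+1}$ — exchangeability makes the value $Z_{n+1}$ a uniformly random element of that collection, so
\[
\mathbb{P}\big(W_{n+1}\le\hat r_{\alpha,n+1}\big)=\mathbb{E}\Big[\tfrac{1}{n+1}\#\{i\in[n+1]:W_i\le\hat r_{\alpha,n+1}\}\Big]\ge 1-\alpha,
\]
which is \eqref{eq:valid_empirical_radius} for the $Z_i$'s.

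The step I expect to be the main obstacle is exchangeability of $(W_1,\dots,W_{n+1})$: one must certify that the map estimator actually deployed is invariant to the ordering of the sample, which entails a careful symmetric treatment of (i) non-uniqueness of the optimal discrete assignment and (ii) ties among the $W_i$ produced by the $n_o$ repeated copies of the origin in $\mathbb{U}_{n+1}$. Everything after that is the standard conformal counting argument, and the radius $\hat r_{\alpha,n+1}$ is by definition the smallest one making the empirical ball-coverage reach $1-\alpha$, so nothing beyond the usual discretization factor $\lceil(1-\alpha)(n+1)\rceil/(n+1)\ge 1-\alpha$ is conceded.
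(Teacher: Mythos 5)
Your proposal is correct and follows essentially the same route as the paper: reduce Proposition~\ref{prop:Vector_PIT_Guarantee} to Lemma~\ref{lm:coverage_empirical_measure}, use exchangeability together with the permutation-symmetry of the data-dependent region to equate the marginal inclusion probabilities, average them into the expectation of the empirical coverage, and note that this empirical coverage is deterministically at least $1-\alpha$ by the definition of $\hat r_{\alpha,n+1}$. Your extra care about permutation equivariance of $\hat T_{n+1}$ (tie-breaking, repeated origin points) is exactly the hypothesis the paper compresses into the phrase ``symmetry of the set $\widehat{\mathcal{R}}_{\alpha,n+1}$,'' so it is a welcome clarification rather than a deviation.
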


Approaches relying on vector-valued probability integral transform, e.g., by leveraging Copulas, have been recently explored 
\cite{messoudi2021copula, park2024semiparametric} and concluded that loss of coverage can occur when the estimated copula of the scores deviates from the true copula and thus does not
formally guarantee finite-sample validity. To our knowledge, \Cref{prop:Vector_PIT_Guarantee} provides the first calibration guarantee for such confidence regions without assumptions on the distribution, for any approximation map $\hat T$.

\begin{figure*}[ht]
    \centering
    \includegraphics[width=\linewidth]{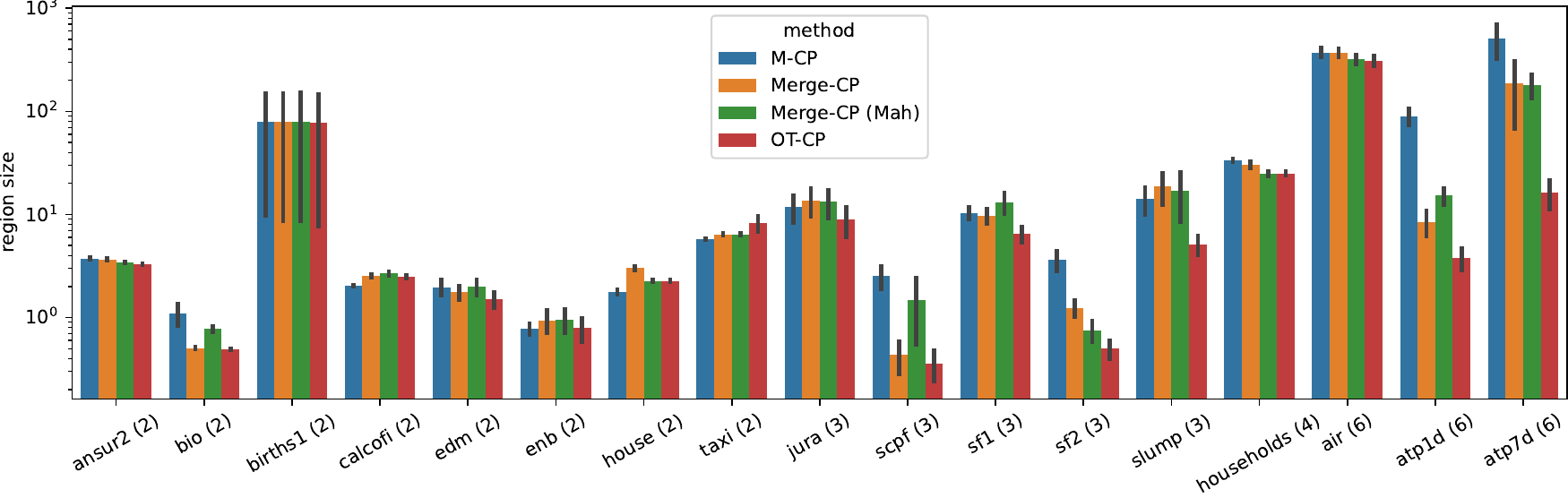}\vskip-.3cm
    \caption{We report the mean and standard error of the region size across 10 different seeds. For \MCP, we use $300$ samples to compute the conditional mean, and for \OTCP, we use $\varepsilon = 0.1$ and $2^{15}=32768$ points in the uniform target measure. Overall, \OTCP\ displays smaller region size than other baselines (13 out of 17 datasets). The output dimension $d$ of each dataset is provided next to its name.}
    \label{fig:small-region}
\end{figure*}

\subsection{Implementation with the Entropic Map}\label{subsec:entropic}

We assume access to two families of samples: residuals $(z_1,\dots ,z_n)$, and a discretization of the uniform grid on the sphere, $(u_1, \dots, u_m)$, with sizes $n,m$ that will be usually different, $n \ne m$.  Learning the entropic map estimator as in \Cref{subsec:entropic} requires running the \citet{Sinkhorn64} algorithm for a given regularization $\varepsilon$ on a $n\times m$ cost matrix. At test time, for each evaluation, computing the weights in \Cref{eq:gibbs} requires computing the distances of a new score $z$ to the uniform grid. The complexity is therefore $O(nm)$ when training the map and conformalizing its norms, and $O(m)$ to transport a conformity score for a given $y$. \looseness=-1

\textbf{Sampling on the sphere.}
As mentioned by \citet{hallin2021}, it is preferable to sample the uniform measure $\mathbb{U}_d$ with diverse samples. This can be achieved using stratified sampling on radii lengths and low-discrepancy samples picked on the sphere. We borrow inspiration from the review provided in \citep{nguyenquasi} and pick their \textit{Gaussian based} mapping approach \citep{basu2016quasi}. This consists %
of mapping a low-discrepancy sequence $w_1,\ldots,w_L$ on $[0,1]^d$ to a potentially low-discrepancy sequence $\theta_1,\ldots,\theta_L$ on $\mathbb{S}^{d-1}$ through the mapping $\theta= \Phi^{-1}(w)/\|\Phi^{-1}(w)\|_2$, where $\Phi^{-1}$ is the inverse CDF of $\mathcal{N}(0,1)$ applied entry-wise.\looseness=-1

\section{Experiments}\label{sec:experiments}
\subsection{Setup and Metrics}
We borrow the experimental setting provided by \citet{dheur2025multioutputconformalregressionunified} and benchmark multivariate conformal methods on a total of 24 tabular datasets. Total data size \( n \) in these datasets ranges from 103 to 50,000, with input dimension \( p \) ranging from 1 to 348, and output dimension \( d \) ranging from 2 to 16. We adopt their approach, which is to rely on a multivariate quantile function forecaster \citep[MQF$^2$,][]{kan2022multivariate}, a normalizing flow that is able to quantify output uncertainty conditioned on input $x$. However, in accordance with our stance mentioned in the background section, we will only assume access to the conditional mean (point-wise) estimator for \OTCP.
\begin{figure*}
    \centering
    \includegraphics[width=\linewidth]{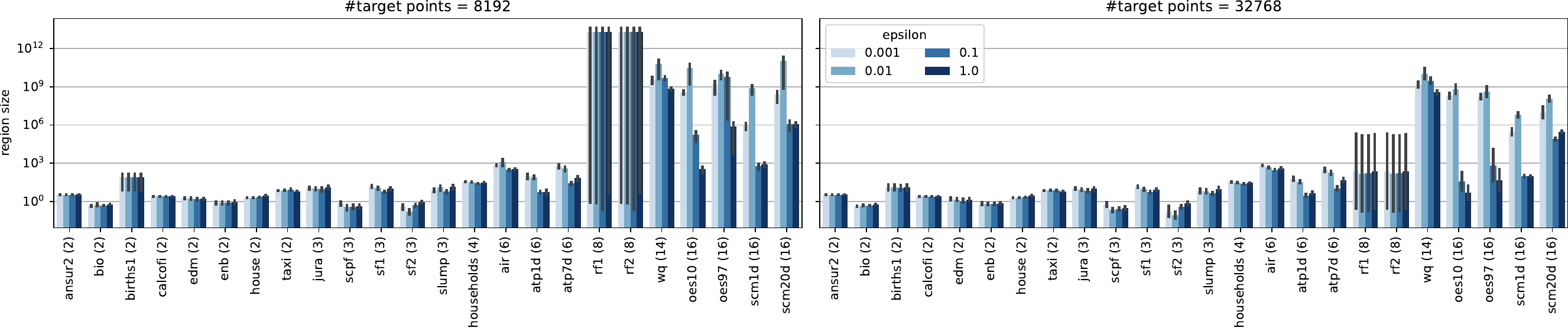}
    \vskip-.2cm
    \caption{This plot details the impact of the two important hyperparameters one needs to set in \OTCP: number of target points $m$ sampled from the uniform ball and the $\varepsilon$ regularization level. As can be seen, larger sample size $m$ improves region size (smaller the better) for roughly all datasets and regularization strengths. On the other hand, one must tune $\varepsilon$ to operate at a suitable regime: not too low, which results in the well-documented poor statistical performance of unregularized / linear program OT, nor too high, which would lead to a collapse of the entropic map to the sphere. Using OTT-JAX and its automatic normalizations, we see that $\varepsilon=0.1$ works best overall.}
    \label{fig:sup-region}
\end{figure*}

As is common in the field, we evaluate the methods using several metrics, including marginal coverage (MC), and mean region size (Size). The latter is using importance sampling, leveraging (when computing test time metrics only), the generative flexibility provided by the MQF$^2$ as an invertible flow. See \citep{dheur2025multioutputconformalregressionunified} and their code for more details on the experimental setup.
\begin{figure}
    \centering
    \includegraphics[width=\linewidth]{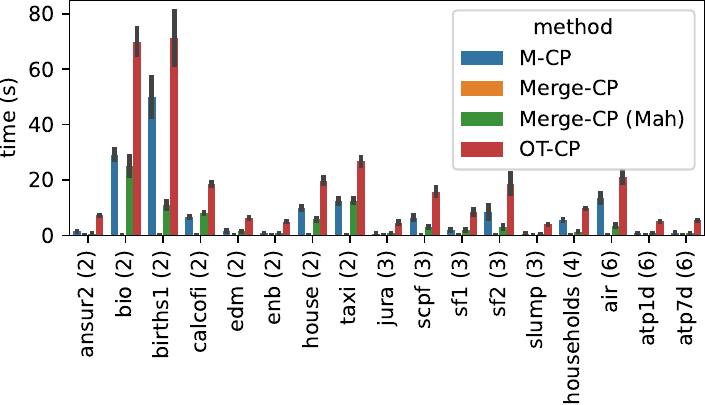}\vskip-.2cm
    \caption{Computational time on small dimensional datasets. \OTCP\ incurs more compute time due to the OT map estimation. See Fig.\ref{fig:big-time} for a similar picture for higher dimensional datasets.}
    \label{fig:small-time}
\end{figure}

\subsection{Hyperparameter Choices}
We apply default parameters for all three competing methods, \MCP\ and \MergeCP, using (or not) the Mahalanobis correction. For \MCP\ using conformalized quantile regression boxes, we follow \citep{dheur2025multioutputconformalregressionunified} and leverage the empirical quantiles return by MQF$^2$ to compute boxes \citep{zhou2024conformalized}.

\OTCP:\ our implementation requires tuning two important hyperparameters: the entropic regularization $\varepsilon$ and the total number of points used to discretize the sphere $m$, not necessarily equal to the input data sample size $n$. These two parameters describe a fundamental statistical and computational trade-off. On the one hand, it is known that increasing $m$ will mechanically improve the ability of $T_\varepsilon$ to recover in the limit $T^\star$ (or at least solve the semi-discrete~\citep{PeyCut19} problem of mapping $n$ data points to the sphere). However, large $m$ incurs a heavier computational price when running the \citeauthor{Sinkhorn64} algorithm. On the other hand, increasing $\varepsilon$ improves on \textit{both} computational and statistical aspects, but deviates further the estimated map from the ground truth $T^\star$ to target instead a blurred map. We have experimented with these aspects and derive from our experiments that both $m$ and $\varepsilon$ should be increased to track increase in dimension. As a sidenote, we do observe that debiasing the outputs of the \citeauthor{Sinkhorn64} algorithm does not result in improved results, which agrees with the findings in~\citep{pooladian2022debiaser}. We use the OTT-JAX toolbox~\citep{cuturi2022optimaltransporttoolsott} to compute these maps.

\subsection{Results}
We present results by differentiating datasets with small dimension $d\leq 6$ from datasets with higher dimensionality $14\leq d\leq 16$, that we expect to be more challenging to handle with OT approaches, owing to the curse of dimensionality that might degrade the quality of multivariate quantiles. Results in \Cref{fig:big-region} indicate an improvement (smaller region for similar coverage) on 15 out of 18 datasets in lower dimensions, this edge vanishing in the higher-dimensional regime. Ablations provided in Figure~\ref{fig:sup-region} highlight the role of $\varepsilon$ and $m$, the entropic regularization strength and the sphere size respectively. These results show that results for high $m$ tend to be better but more costly, while the tuning of the regularization strength $\varepsilon$ needs to be tuned according to dimension~\citep{vacher2022parameter}. Finally, \Cref{fig:taxi} provides an illustration of the non-elliptic CP regions outputted by \OTCP, by pulling back the rescaled uniform sphere using the inverse entropic mapping described in Section~\ref{subsec:entropic}.

\section{Conclusion}
We have proposed \OTCP, a new approach that can leverage a recently proposed formulation for multivariate quantiles that uses optimal transport theory and optimal transport map estimators. We show the theoretical soundness of this approach, but, most importantly, demonstrate its applicability throughout a broad range of tasks compiled by \citep{dheur2025multioutputconformalregressionunified}. Compared to similar baselines that either use a conditional mean regression estimator (\MergeCP), or more involved quantile regression estimators (\MCP), \OTCP\, shows overall superior performance, while incurring, predictably, a higher train / calibration time cost. The challenges brought forward by the estimation of OT maps in high dimensions~\citep{chewi2024statistical} require being particularly careful when tuning entropic regularization and grid size. However, we show that there exists a reasonable setting for both of these parameters that delivers good performance across most tasks.

\begin{figure}
    \centering
    \includegraphics[width=\linewidth]{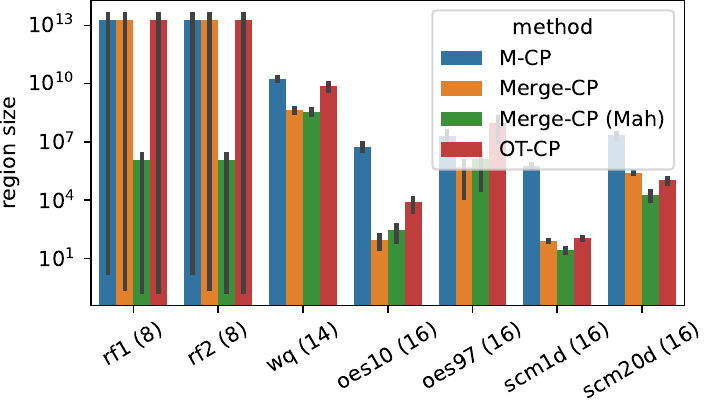}
    \vskip-.5cm
    \caption{As in \ref{fig:small-region}, we report mean and standard errors for region size (log scale) across 10 different seeds for larger datasets. We keep the same parameters and importantly $\varepsilon = 0.1$ and $2^{15}=32768$ points in the uniform target measure. We expect the performance of \OTCP\, to decrease with dimensionality, but it does provide a convincing alternative to the other approaches.}
    \label{fig:big-region}
\end{figure}

\begin{figure}
    \centering
    \includegraphics[width=\linewidth]{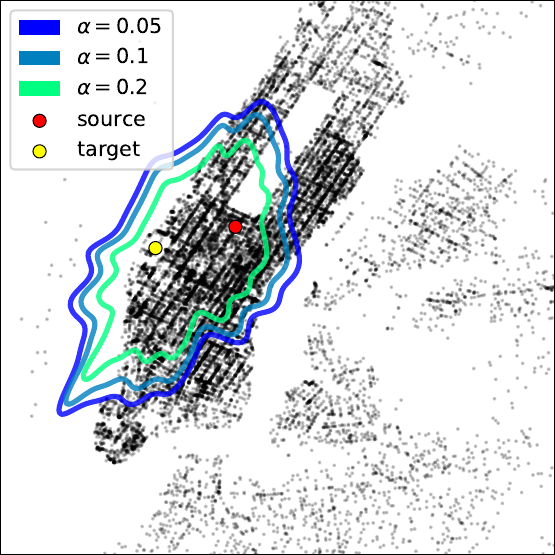}
    \caption{Conformal sets recovered by mapping back the reduced sphere on the Manhattan map, in agreement with Equation~\ref{eq:transport_oracle_coverage}, on a prediction for the \texttt{taxi} dataset. We use the inverse entropic map mentioned in Section~\ref{subsec:entropic}, mapping back the gridded sphere of size $m=2^{15}$ for each level, and plotting its outer contour.}
    \label{fig:taxi}
\end{figure}

\newpage

\section{Concurrent Work.}\label{sec:concurrent}
Concurrently to our work, \citet{thurin2025optimaltransportbasedconformalprediction} proposed recently to leverage OT in CP with a similar approach, deriving a similar CP set as in \Cref{eq:full_transport_otcp} and analyzing a variant with asymptotic conditional coverage under additional regularity assumptions.
However, our methods differ in several key aspects. On the computational side, our implementation leverages general entropic maps (Section~\ref{subsec:entropic}) without compromising finite-sample coverage guarantees, an aspect we analyze in detail in Section~\ref{sec:coverage}.
In contrast, their approach requires solving a linear assignment problem,  using for instance the Hungarian algorithm, which has cubic complexity $O(n^3)$ in the number of target points, and which also requires having a target set on the sphere that is of the same size as the number of input points. With our notations in Section~\ref{subsec:entropic}, they require $n=m$, whereas we set $m$ to anywhere between $2^{12}$ and $2^{15}$, independently of $n$. While they mention efficient approximations that reduce complexity to quadratic in \citep[Remark 2.3]{thurin2025optimaltransportbasedconformalprediction}, their theoretical results do not yet cover these cases since their analysis relies on the fact that ranks are random permutations of $\{1/n, 2/n, \ldots, 1\}$, which cannot be extended to using Sinkhorn with soft assignment.
In contrast, our work establishes formal theoretical coverage guarantees even when approximated (pre-trained) transport map are used.\looseness=-1

\bibliography{ref}

\begin{thebibliography}{53}
\providecommand{\natexlab}[1]{#1}
\providecommand{\url}[1]{\texttt{#1}}
\expandafter\ifx\csname urlstyle\endcsname\relax
  \providecommand{\doi}[1]{doi: #1}\else
  \providecommand{\doi}{doi: \begingroup \urlstyle{rm}\Url}\fi

\bibitem[Angelopoulos \& Bates(2021)Angelopoulos and
  Bates]{angelopoulos2021gentle}
Angelopoulos, A.~N. and Bates, S.
\newblock A gentle introduction to conformal prediction and distribution-free
  uncertainty quantification.
\newblock \emph{arXiv preprint arXiv:2107.07511}, 2021.

\bibitem[Balasubramanian et~al.(2014)Balasubramanian, Ho, and
  Vovk]{balasubramanian2014conformal}
Balasubramanian, V., Ho, S.-S., and Vovk, V.
\newblock \emph{Conformal prediction for reliable machine learning: theory,
  adaptations and applications}.
\newblock Newnes, 2014.

\bibitem[Barber et~al.(2023)Barber, Candes, Ramdas, and
  Tibshirani]{barber2023conformal}
Barber, R.~F., Candes, E.~J., Ramdas, A., and Tibshirani, R.~J.
\newblock Conformal prediction beyond exchangeability.
\newblock \emph{The Annals of Statistics}, 51\penalty0 (2):\penalty0 816--845,
  2023.

\bibitem[Basu(2016)]{basu2016quasi}
Basu, K.
\newblock \emph{Quasi-Monte Carlo Methods in Non-Cubical Spaces}.
\newblock Stanford University, 2016.

\bibitem[Bates et~al.(2021)Bates, Cand{\`e}s, Lei, Romano, and
  Sesia]{Bates_Candes_Lei_Romano_Sesia21}
Bates, S., Cand{\`e}s, E., Lei, L., Romano, Y., and Sesia, M.
\newblock Testing for outliers with conformal p-values.
\newblock \emph{arXiv preprint arXiv:2104.08279}, 2021.

\bibitem[Brenier(1991)]{Bre91}
Brenier, Y.
\newblock Polar factorization and monotone rearrangement of vector-valued
  functions.
\newblock \emph{Communications on Pure and Applied Mathematics}, 44\penalty0
  (4), 1991.
\newblock \doi{10.1002/cpa.3160440402}.

\bibitem[Cella \& Ryan(2020)Cella and Ryan]{Cella_Martin20}
Cella, L. and Ryan, R.
\newblock Valid distribution-free inferential models for prediction.
\newblock \emph{arXiv preprint arXiv:2001.09225}, 2020.

\bibitem[Chernozhukov et~al.(2017)Chernozhukov, Galichon, Hallin, and
  Henry]{chernozhukov2017}
Chernozhukov, V., Galichon, A., Hallin, M., and Henry, M.
\newblock {Monge–Kantorovich depth, quantiles, ranks and signs}.
\newblock \emph{The Annals of Statistics}, 45\penalty0 (1):\penalty0 223 --
  256, 2017.
\newblock \doi{10.1214/16-AOS1450}.
\newblock URL \url{https://doi.org/10.1214/16-AOS1450}.

\bibitem[Chernozhukov et~al.(2018)Chernozhukov, W{\"u}thrich, and
  Zhu]{Chernozhukov_Wuthrich_Zhu18}
Chernozhukov, V., W{\"u}thrich, K., and Zhu, Y.
\newblock Exact and robust conformal inference methods for predictive machine
  learning with dependent data.
\newblock \emph{Conference On Learning Theory}, 2018.

\bibitem[Chernozhukov et~al.(2021)Chernozhukov, W{\"u}thrich, and
  Zhu]{chernozhukov2021exact}
Chernozhukov, V., W{\"u}thrich, K., and Zhu, Y.
\newblock An exact and robust conformal inference method for counterfactual and
  synthetic controls.
\newblock \emph{Journal of the American Statistical Association}, 116\penalty0
  (536):\penalty0 1849--1864, 2021.

\bibitem[Chewi et~al.(2024)Chewi, Niles-Weed, and
  Rigollet]{chewi2024statistical}
Chewi, S., Niles-Weed, J., and Rigollet, P.
\newblock Statistical optimal transport.
\newblock \emph{arXiv preprint arXiv:2407.18163}, 2024.

\bibitem[Cuturi(2013)]{cuturi2013sinkhorn}
Cuturi, M.
\newblock Sinkhorn distances: Lightspeed computation of optimal transport.
\newblock In \emph{Advances in neural information processing systems}, pp.\
  2292--2300, 2013.

\bibitem[Cuturi et~al.(2019)Cuturi, Teboul, and Vert]{cuturi2019differentiable}
Cuturi, M., Teboul, O., and Vert, J.-P.
\newblock Differentiable ranking and sorting using optimal transport.
\newblock \emph{Advances in neural information processing systems}, 32, 2019.

\bibitem[Cuturi et~al.(2022)Cuturi, Meng-Papaxanthos, Tian, Bunne, Davis, and
  Teboul]{cuturi2022optimaltransporttoolsott}
Cuturi, M., Meng-Papaxanthos, L., Tian, Y., Bunne, C., Davis, G., and Teboul,
  O.
\newblock Optimal transport tools (ott): A jax toolbox for all things
  wasserstein, 2022.
\newblock URL \url{https://arxiv.org/abs/2201.12324}.

\bibitem[Dheur et~al.(2025)Dheur, Fontana, Estievenart, Desobry, and
  Taieb]{dheur2025multioutputconformalregressionunified}
Dheur, V., Fontana, M., Estievenart, Y., Desobry, N., and Taieb, S.~B.
\newblock Multi-output conformal regression: A unified comparative study with
  new conformity scores, 2025.
\newblock URL \url{https://arxiv.org/abs/2501.10533}.

\bibitem[Fisch et~al.(2021)Fisch, Schuster, Jaakkola, and
  Barzilay]{Fisch_Schuster_Jaakkola_Barzilay21}
Fisch, A., Schuster, T., Jaakkola, T., and Barzilay, R.
\newblock Few-shot conformal prediction with auxiliary tasks.
\newblock \emph{ICML}, 2021.

\bibitem[Gammerman et~al.(1998)Gammerman, Vovk, and
  Vapnik]{gammerman1998learning}
Gammerman, A., Vovk, V., and Vapnik, V.
\newblock Learning by transduction, 1998.

\bibitem[Gelbrich(1990)]{gelbrich1990formula}
Gelbrich, M.
\newblock On a formula for the $l^2$ wasserstein metric between measures on
  euclidean and hilbert spaces.
\newblock \emph{Mathematische Nachrichten}, 147\penalty0 (1), 1990.

\bibitem[Guha et~al.(2024)Guha, Natarajan, M{\"o}llenhoff, Khan, and
  Ndiaye]{guha2024conformal}
Guha, E., Natarajan, S., M{\"o}llenhoff, T., Khan, M.~E., and Ndiaye, E.
\newblock Conformal prediction via regression-as-classification.
\newblock \emph{arXiv preprint arXiv:2404.08168}, 2024.

\bibitem[Hallin et~al.(2021)Hallin, del Barrio, Cuesta-Albertos, and
  Matr{\'a}n]{hallin2021}
Hallin, M., del Barrio, E., Cuesta-Albertos, J., and Matr{\'a}n, C.
\newblock {Distribution and quantile functions, ranks and signs in dimension d:
  A measure transportation approach}.
\newblock \emph{The Annals of Statistics}, 49\penalty0 (2):\penalty0 1139 --
  1165, 2021.
\newblock \doi{10.1214/20-AOS1996}.
\newblock URL \url{https://doi.org/10.1214/20-AOS1996}.

\bibitem[Hallin et~al.(2022)Hallin, La~Vecchia, and Liu]{hallin2022center}
Hallin, M., La~Vecchia, D., and Liu, H.
\newblock Center-outward r-estimation for semiparametric varma models.
\newblock \emph{Journal of the American Statistical Association}, 117\penalty0
  (538):\penalty0 925--938, 2022.

\bibitem[Hallin et~al.(2023)Hallin, Hlubinka, and
  Hudecov{\'a}]{hallin2023efficient}
Hallin, M., Hlubinka, D., and Hudecov{\'a}, {\v{S}}.
\newblock Efficient fully distribution-free center-outward rank tests for
  multiple-output regression and manova.
\newblock \emph{Journal of the American Statistical Association}, 118\penalty0
  (543):\penalty0 1923--1939, 2023.

\bibitem[Henderson et~al.(2024)Henderson, Mazoyer, and
  Gamboa]{henderson2024adaptive}
Henderson, I., Mazoyer, A., and Gamboa, F.
\newblock Adaptive inference with random ellipsoids through conformal
  conditional linear expectation.
\newblock \emph{arXiv preprint arXiv:2409.18508}, 2024.

\bibitem[Ho \& Wechsler(2008)Ho and Wechsler]{Ho_Wechsler08}
Ho, S.-S. and Wechsler, H.
\newblock Query by transduction.
\newblock \emph{IEEE transactions on pattern analysis and machine
  intelligence}, 2008.

\bibitem[Holland(2020)]{Holland20}
Holland, M.~J.
\newblock Making learning more transparent using conformalized performance
  prediction.
\newblock \emph{arXiv preprint arXiv:2007.04486}, 2020.

\bibitem[Izbicki et~al.(2022)Izbicki, Shimizu, and Stern]{izbicki2022cd}
Izbicki, R., Shimizu, G., and Stern, R.~B.
\newblock Cd-split and hpd-split: Efficient conformal regions in high
  dimensions.
\newblock \emph{Journal of Machine Learning Research}, 23\penalty0
  (87):\penalty0 1--32, 2022.

\bibitem[Johnstone \& Cox(2021)Johnstone and Cox]{pmlr-v152-johnstone21a}
Johnstone, C. and Cox, B.
\newblock Conformal uncertainty sets for robust optimization.
\newblock In Carlsson, L., Luo, Z., Cherubin, G., and An~Nguyen, K. (eds.),
  \emph{Proceedings of the Tenth Symposium on Conformal and Probabilistic
  Prediction and Applications}, volume 152 of \emph{Proceedings of Machine
  Learning Research}, pp.\  72--90. PMLR, 08--10 Sep 2021.
\newblock URL \url{https://proceedings.mlr.press/v152/johnstone21a.html}.

\bibitem[Kan et~al.(2022)Kan, Aubet, Januschowski, Park, Benidis, Ruthotto, and
  Gasthaus]{kan2022multivariate}
Kan, K., Aubet, F.-X., Januschowski, T., Park, Y., Benidis, K., Ruthotto, L.,
  and Gasthaus, J.
\newblock Multivariate quantile function forecaster.
\newblock In \emph{International Conference on Artificial Intelligence and
  Statistics}, pp.\  10603--10621. PMLR, 2022.

\bibitem[Katsios \& Papadopulos(2024)Katsios and
  Papadopulos]{pmlr-v230-katsios24a}
Katsios, K. and Papadopulos, H.
\newblock Multi-label conformal prediction with a mahalanobis distance
  nonconformity measure.
\newblock In Vantini, S., Fontana, M., Solari, A., Boström, H., and Carlsson,
  L. (eds.), \emph{Proceedings of the Thirteenth Symposium on Conformal and
  Probabilistic Prediction with Applications}, volume 230 of \emph{Proceedings
  of Machine Learning Research}, pp.\  522--535. PMLR, 09--11 Sep 2024.
\newblock URL \url{https://proceedings.mlr.press/v230/katsios24a.html}.

\bibitem[Kumar et~al.(2023)Kumar, Lu, Gupta, Palepu, Bellamy, Raskar, and
  Beam]{kumar2023conformal}
Kumar, B., Lu, C., Gupta, G., Palepu, A., Bellamy, D., Raskar, R., and Beam, A.
\newblock Conformal prediction with large language models for multi-choice
  question answering.
\newblock \emph{arXiv preprint arXiv:2305.18404}, 2023.

\bibitem[Laxhammar \& Falkman(2015)Laxhammar and Falkman]{Laxhammar_Falkman15}
Laxhammar, R. and Falkman, G.
\newblock Inductive conformal anomaly detection for sequential detection of
  anomalous sub-trajectories.
\newblock \emph{Annals of Mathematics and Artificial Intelligence}, 2015.

\bibitem[Lin et~al.(2022)Lin, Trivedi, and Sun]{Lin_Trivedi_Sun22}
Lin, Z., Trivedi, S., and Sun, J.
\newblock Conformal prediction intervals with temporal dependence.
\newblock \emph{Transactions of Machine Learning Research}, 2022.

\bibitem[Lu et~al.(2022)Lu, Lemay, Chang, H{\"o}bel, and
  Kalpathy-Cramer]{lu2022fair}
Lu, C., Lemay, A., Chang, K., H{\"o}bel, K., and Kalpathy-Cramer, J.
\newblock Fair conformal predictors for applications in medical imaging.
\newblock In \emph{Proceedings of the AAAI Conference on Artificial
  Intelligence}, volume~36, pp.\  12008--12016, 2022.

\bibitem[Messoudi et~al.(2021)Messoudi, Destercke, and
  Rousseau]{messoudi2021copula}
Messoudi, S., Destercke, S., and Rousseau, S.
\newblock Copula-based conformal prediction for multi-target regression.
\newblock \emph{Pattern Recognition}, 120:\penalty0 108101, 2021.

\bibitem[Muzellec \& Cuturi(2018)Muzellec and Cuturi]{muzellec2018generalizing}
Muzellec, B. and Cuturi, M.
\newblock Generalizing point embeddings using the wasserstein space of
  elliptical distributions.
\newblock \emph{Advances in Neural Information Processing Systems}, 31, 2018.

\bibitem[Nguyen et~al.(2024)Nguyen, Bariletto, and Ho]{nguyenquasi}
Nguyen, K., Bariletto, N., and Ho, N.
\newblock Quasi-monte carlo for 3d sliced wasserstein.
\newblock In \emph{The Twelfth International Conference on Learning
  Representations}, 2024.

\bibitem[Park et~al.(2024)Park, Tibshirani, and Cho]{park2024semiparametric}
Park, J.~W., Tibshirani, R., and Cho, K.
\newblock Semiparametric conformal prediction.
\newblock \emph{arXiv preprint arXiv:2411.02114}, 2024.

\bibitem[Peyr{\'e} \& Cuturi(2019)Peyr{\'e} and Cuturi]{PeyCut19}
Peyr{\'e}, G. and Cuturi, M.
\newblock Computational optimal transport.
\newblock \emph{Foundations and Trends{\textregistered} in Machine Learning},
  11, 2019.

\bibitem[Pooladian \& Niles-Weed(2021)Pooladian and
  Niles-Weed]{pooladian2021entropic}
Pooladian, A.-A. and Niles-Weed, J.
\newblock Entropic estimation of optimal transport maps.
\newblock \emph{arXiv preprint arXiv:2109.12004}, 2021.

\bibitem[Pooladian et~al.(2022)Pooladian, Cuturi, and
  Niles-Weed]{pooladian2022debiaser}
Pooladian, A.-A., Cuturi, M., and Niles-Weed, J.
\newblock Debiaser beware: Pitfalls of centering regularized transport maps.
\newblock In \emph{International Conference on Machine Learning}, pp.\
  17830--17847. PMLR, 2022.

\bibitem[Quach et~al.(2023)Quach, Fisch, Schuster, Yala, Sohn, Jaakkola, and
  Barzilay]{quach2023conformal}
Quach, V., Fisch, A., Schuster, T., Yala, A., Sohn, J.~H., Jaakkola, T.~S., and
  Barzilay, R.
\newblock Conformal language modeling.
\newblock \emph{arXiv preprint arXiv:2306.10193}, 2023.

\bibitem[Romano et~al.(2019)Romano, Patterson, and
  Candes]{romano2019conformalized}
Romano, Y., Patterson, E., and Candes, E.
\newblock Conformalized quantile regression.
\newblock \emph{Advances in neural information processing systems}, 32, 2019.

\bibitem[Shafer \& Vovk(2008)Shafer and Vovk]{Shafer_Vovk08}
Shafer, G. and Vovk, V.
\newblock A tutorial on conformal prediction.
\newblock \emph{Journal of Machine Learning Research}, 2008.

\bibitem[Sinkhorn(1964)]{Sinkhorn64}
Sinkhorn, R.
\newblock A relationship between arbitrary positive matrices and doubly
  stochastic matrices.
\newblock \emph{Ann. Math. Statist.}, 35:\penalty0 876--879, 1964.

\bibitem[Straitouri et~al.(2023)Straitouri, Wang, Okati, and
  Rodriguez]{straitouri2023improving}
Straitouri, E., Wang, L., Okati, N., and Rodriguez, M.~G.
\newblock Improving expert predictions with conformal prediction.
\newblock In \emph{International Conference on Machine Learning}, pp.\
  32633--32653. PMLR, 2023.

\bibitem[Thurin et~al.(2025)Thurin, Nadjahi, and
  Boyer]{thurin2025optimaltransportbasedconformalprediction}
Thurin, G., Nadjahi, K., and Boyer, C.
\newblock Optimal transport-based conformal prediction, 2025.
\newblock URL \url{https://arxiv.org/abs/2501.18991}.

\bibitem[Tibshirani et~al.(2019)Tibshirani, Foygel~Barber, Candes, and
  Ramdas]{tibshirani2019conformal}
Tibshirani, R.~J., Foygel~Barber, R., Candes, E., and Ramdas, A.
\newblock Conformal prediction under covariate shift.
\newblock \emph{Advances in neural information processing systems}, 32, 2019.

\bibitem[Vacher \& Vialard(2022)Vacher and Vialard]{vacher2022parameter}
Vacher, A. and Vialard, F.-X.
\newblock Parameter tuning and model selection in optimal transport with
  semi-dual brenier formulation.
\newblock In Oh, A.~H., Agarwal, A., Belgrave, D., and Cho, K. (eds.),
  \emph{Advances in Neural Information Processing Systems}, 2022.

\bibitem[Vovk et~al.(2005)Vovk, Gammerman, and Shafer]{Vovk_Gammerman_Shafer05}
Vovk, V., Gammerman, A., and Shafer, G.
\newblock \emph{Algorithmic learning in a random world}.
\newblock Springer, 2005.

\bibitem[Wang et~al.(2022)Wang, Gao, Yin, Zhou, and
  Blei]{wang2022probabilistic}
Wang, Z., Gao, R., Yin, M., Zhou, M., and Blei, D.~M.
\newblock Probabilistic conformal prediction using conditional random samples.
\newblock \emph{arXiv preprint arXiv:2206.06584}, 2022.

\bibitem[Xu \& Xie(2021)Xu and Xie]{Xu_Xie20}
Xu, C. and Xie, Y.
\newblock Conformal prediction interval for dynamic time-series.
\newblock \emph{ICML}, 2021.

\bibitem[Zaffran et~al.(2022)Zaffran, F{\'e}ron, Goude, Josse, and
  Dieuleveut]{zaffran2022adaptive}
Zaffran, M., F{\'e}ron, O., Goude, Y., Josse, J., and Dieuleveut, A.
\newblock Adaptive conformal predictions for time series.
\newblock In \emph{International Conference on Machine Learning}, pp.\
  25834--25866. PMLR, 2022.

\bibitem[Zhou et~al.(2024)Zhou, Lindemann, and Sesia]{zhou2024conformalized}
Zhou, Y., Lindemann, L., and Sesia, M.
\newblock Conformalized adaptive forecasting of heterogeneous trajectories.
\newblock \emph{arXiv preprint arXiv:2402.09623}, 2024.

\end{thebibliography}
\bibliographystyle{icml2025}

\appendix
\onecolumn

\section{Appendix}
We provide a few additional results related to the experiments proposed in Section~\ref{sec:experiments}.
\begin{figure}
    \centering
    \includegraphics[width=.5\linewidth]{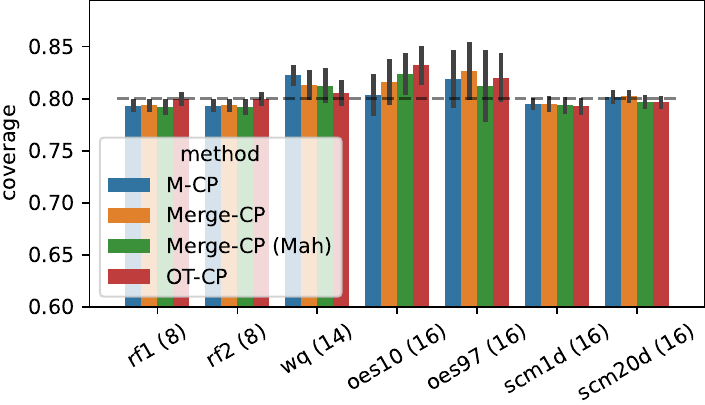}
    \caption{Coverage for higher dimensional datasets, corresponding to the setting displayed in \Cref{fig:big-coverage}.}
    \label{fig:big-coverage}
\end{figure}

\begin{figure}
    \centering
    \includegraphics[width=.5\linewidth]{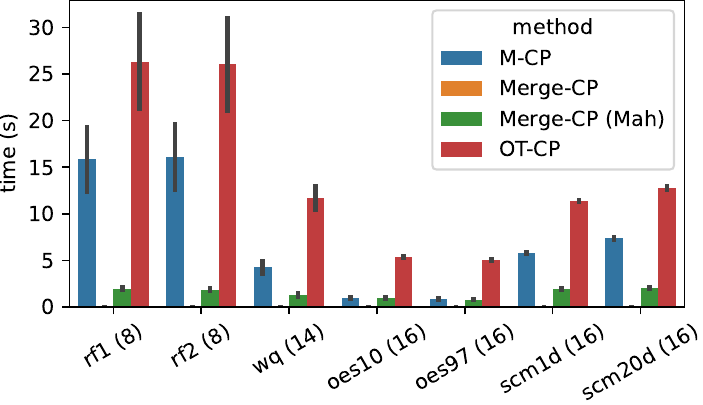}
    \caption{Runtimes for higher dimensional datasets, corresponding to the setting displayed in \Cref{fig:big-coverage}.}
    \label{fig:big-time}
\end{figure}

\begin{figure}
    \centering
    \includegraphics[width=\linewidth]{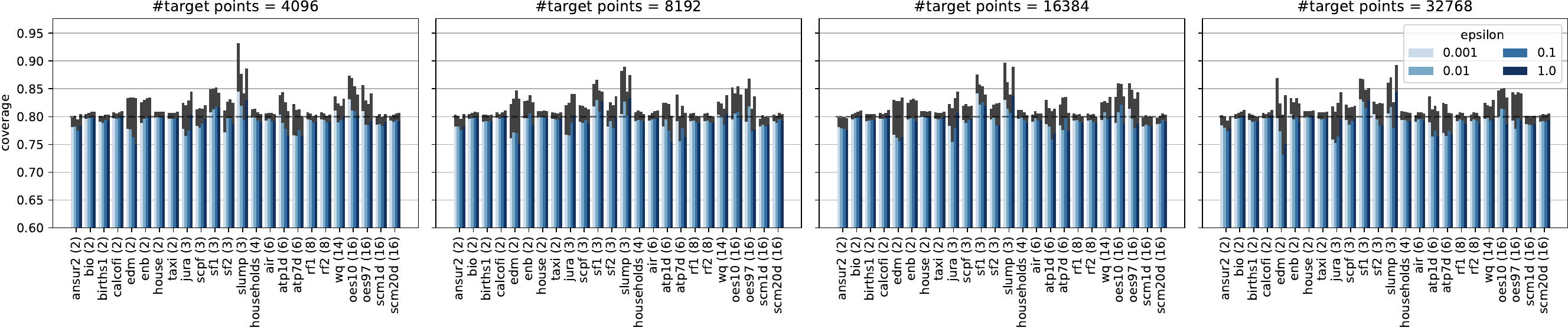}
    \caption{Ablation: coverage quality as a function of hyperparameters, with the setting corresponding to \Cref{fig:sup-region}.}
    \label{fig:sup-coverage}
\end{figure}
\begin{figure}
    \centering
    \includegraphics[width=\linewidth]{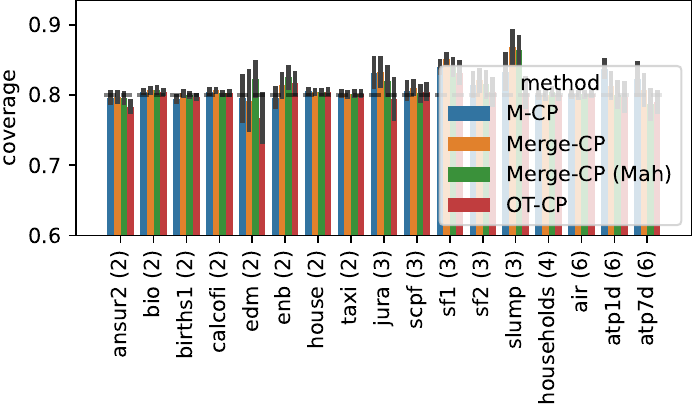}
    \caption{Coverage of all baselines on small dimensional datasets, corresponding to the region sizes given in \Cref{fig:small-region}.}
    \label{fig:small-coverage}
\end{figure}

\begin{figure*}
    \centering
    \includegraphics[width=\linewidth]{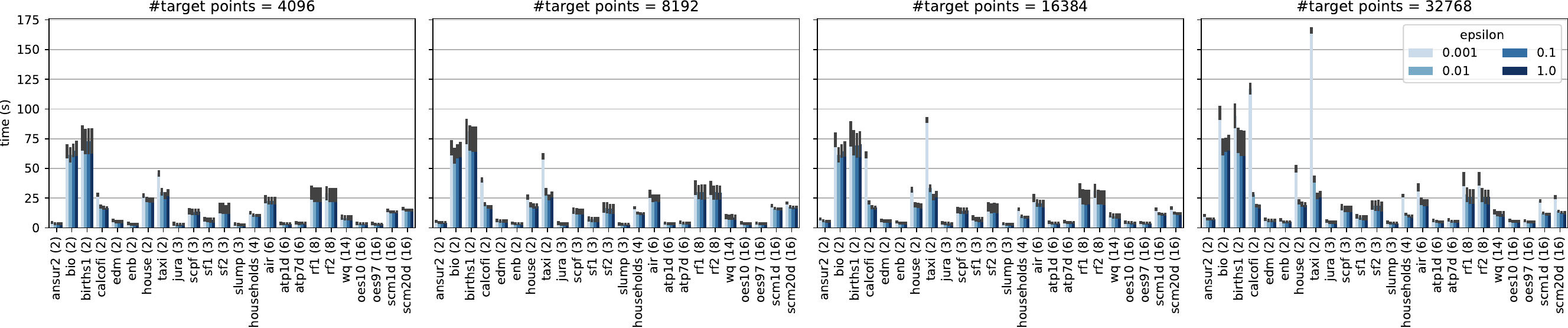}
    \caption{Ablation: running time as a function of hyperparameters, with the setting corresponding to \Cref{fig:sup-region}.}
    \label{fig:enter-label}
\end{figure*}

\newpage
\section{Proofs}

\begin{proposition}
Given $n$ discrete sample points distributed over a sphere with radii $\{0, \frac{1}{n_R}, \frac{2}{n_R}, \ldots, 1\}$ and directions uniformly sampled on the sphere, the smallest radius $r_\alpha = \frac{j_\alpha}{n_R}$ satisfying $(1-\alpha)$-coverage is
is determined by 
$$
j_\alpha = \left\lceil \frac{(n+1) (1 - \alpha) - n_o}{n_S} \right\rceil,
$$
where $n_S$ is the number of directions, $n_R$ is the number of radii, and $n_o$ is the number of copies of the origin ($\|U\| = 0$).
\end{proposition}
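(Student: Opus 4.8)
The plan is to reduce the $(1-\alpha)$-coverage requirement to an elementary counting problem on the target grid, exploiting the fact --- already recorded in \Cref{eq:distribution_empirical_transport} --- that the norm of the transported test point is uniform over the grid radii. I would first fix notation for $\mathbb{U}_{n+1}$: it places mass $n_o/(n+1)$ at the origin and mass $1/(n+1)$ at each of the $n_R n_S = (n+1)-n_o$ remaining grid points, which lie on concentric shells of radius $1/n_R, 2/n_R, \dots, 1$ with exactly $n_S$ points on each shell. Hence, for each integer $0 \le j \le n_R$, the closed ball $B(0, j/n_R)$ contains precisely $n_o + j\,n_S$ grid points (the $n_o$ copies of the origin plus the $n_S$ points on each of the first $j$ shells), so
\[
\mathbb{U}_{n+1}\big(B(0, j/n_R)\big) = \frac{n_o + j\,n_S}{n+1}.
\]

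Next I would make the coverage explicit. Writing $R_{n+1} := \|\bar T_{n+1}(Z_{n+1})\|$ and using that the extrapolation agrees with $T_{n+1}$ on the sample (so $R_{n+1} = \|T_{n+1}(Z_{n+1})\|$), the region $\mathcal{R}(\bar T_{n+1}, r) = \{z : \|\bar T_{n+1}(z)\| \le r\}$ contains $Z_{n+1}$ iff $R_{n+1} \le r$. Because $T_{n+1}$ in \eqref{eq:empirical_transport_map} is built from the \emph{full} sample $(Z_1,\dots,Z_{n+1})$ and the assignment problem is symmetric in the labels, exchangeability of the $Z_i$ makes $(\|T_{n+1}(Z_i)\|)_{i\in[n+1]}$ exchangeable; its multiset of entries is the deterministic multiset of grid radii, so $R_{n+1}$ is uniform over that multiset, i.e.\ $\mathbb{P}(R_{n+1}=0) = n_o/(n+1)$ and $\mathbb{P}(R_{n+1} = j/n_R) = n_S/(n+1)$ for $j\in[n_R]$ --- this is exactly \Cref{eq:distribution_empirical_transport}. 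Consequently $\mathbb{P}\big(Z_{n+1} \in \mathcal{R}(\bar T_{n+1}, j/n_R)\big) = (n_o + j\,n_S)/(n+1)$.

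The last step is to solve the inequality: this probability is $\ge 1-\alpha$ iff $n_o + j\,n_S \ge (n+1)(1-\alpha)$, i.e.\ $j \ge \big((n+1)(1-\alpha)-n_o\big)/n_S$, and since $j$ must be a nonnegative integer the smallest admissible value is $j_\alpha = \big\lceil ((n+1)(1-\alpha)-n_o)/n_S \big\rceil$, giving $r_\alpha = j_\alpha/n_R$. To confirm minimality over all real radii, I would note that $r \mapsto \mathbb{P}(R_{n+1} \le r)$ is a step function, constant on each interval $[(j-1)/n_R, j/n_R)$ and jumping only at the grid radii, so every $r < j_\alpha/n_R$ gives coverage $\le (n_o + (j_\alpha-1)n_S)/(n+1) < 1-\alpha$.

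The main obstacle is not the arithmetic but justifying that $R_{n+1}$ is \emph{exactly} uniform over the grid radii: this requires the optimal assignment in \eqref{eq:empirical_transport_map} to be a permutation-equivariant function of the unordered sample --- unique almost surely when $\mathbb{P}$ has a density, or otherwise made so by a symmetric tie-breaking rule --- so that relabelling the $Z_i$ relabels the assigned grid points and exchangeability is preserved. Since \Cref{eq:distribution_empirical_transport} already encapsulates this, the argument reduces to the counting above; I would also note the degenerate case $j_\alpha \le 0$ (i.e.\ $n_o \ge (n+1)(1-\alpha)$), where $r_\alpha$ collapses to $0$ and the region is the single point $\{0\}$, already covered with probability $n_o/(n+1) \ge 1-\alpha$.
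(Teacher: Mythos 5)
Your proposal is correct and follows essentially the same route as the paper's proof: count the grid points in the ball $B(0, j/n_R)$ to get cumulative mass $(n_o + j\,n_S)/(n+1)$, then solve the inequality for the smallest integer $j$. The extra material you add (the exchangeability justification of \Cref{eq:distribution_empirical_transport}, minimality over non-grid radii, and the degenerate case $j_\alpha \le 0$) goes slightly beyond what the paper writes but does not change the argument.
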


\begin{proof}
The discrete spherical uniform distribution places the same probability mass on all $n+1$ sample points, including the $n_o$ copies of the origin. As such, given a radius $r_j = \frac{j}{n_R}$, we have
$$
\mathbb{P}(\|U\| = r_j) = n_S \cdot \frac{1}{n+1}.
$$
The cumulative probability up to radius $r_j$ is given by:
\begin{align*}
\mathbb{P}(\|U\| \leq r_j) = \mathbb{P}(\|U\| = 0) + \sum_{k=1}^j \mathbb{P}(\|U\| = r_k) 
= \frac{n_o}{n+1} + j \times \frac{n_S}{n+1}.
\end{align*}
To find the smallest $r_\alpha = \frac{j_\alpha}{n_R}$ such that $\mathbb{P}(\|U\| \leq r_{j_\alpha}) \geq 1 - \alpha$, it suffices to solve:
$$
\frac{n_o}{n+1} + j_\alpha \times \frac{n_S}{n+1} \geq 1 - \alpha.
$$
\end{proof}

\begin{lemma}[Coverage of Empirical Quantile Region]
Let $Z_1, \ldots, Z_n, Z_{n+1}$ be a sequence of exchangeable variables in $\mathbb{R}^d$, then,
$
   \mathbb{P}(Z_{n+1} \in \widehat {\mathcal{R}}_{\alpha, n+1}) \geq 1-\alpha,
$
where, for simplicity, we denoted the approximated empirical quantile region as $\widehat {\mathcal{R}}_{\alpha, n+1} = \mathcal{R}(\hat T_{n+1}, \hat r_{\alpha, n+1})$. 
\end{lemma}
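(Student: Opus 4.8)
The plan is to reduce the statement to the univariate exchangeability argument of \Cref{lm:PIT_onedim} by observing that, once the approximate transport map $\hat T_{n+1}$ is built symmetrically from the $n+1$ points, the scalar quantities $R_i := \|\hat T_{n+1}(Z_i)\|$ form an exchangeable sequence, and then applying the empirical-quantile calibration to these scalars. The key structural fact I would isolate first is that $\hat T_{n+1}$ is a \emph{permutation-equivariant} function of the sample: both the optimal-assignment construction in \Cref{eq:empirical_transport_map} and its entropic surrogate depend on $(Z_1,\dots,Z_{n+1})$ only through the unordered multiset, so relabeling the inputs relabels the outputs in the same way. Consequently, for any permutation $\pi$, the joint law of $(R_{\pi(1)},\dots,R_{\pi(n+1)})$ equals that of $(R_1,\dots,R_{n+1})$ whenever $(Z_1,\dots,Z_{n+1})$ is exchangeable.

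First I would make precise the definition of $\hat r_{\alpha,n+1} = r_\alpha(\hat T_{n+1},\mathbb{P}_{n+1}) = \inf\{r\geq 0:\ \hat{\mathbb{U}}_{n+1}(B(0,r))\geq 1-\alpha\}$, noting that $\hat{\mathbb{U}}_{n+1} = (\hat T_{n+1})_\#\mathbb{P}_{n+1}$ is precisely the empirical distribution of $R_1,\dots,R_{n+1}$. Hence $\hat{\mathbb{U}}_{n+1}(B(0,r)) = \frac{1}{n+1}\sum_{i=1}^{n+1}\mathds{1}\{R_i\leq r\}$, and the event $\{Z_{n+1}\in\widehat{\mathcal{R}}_{\alpha,n+1}\} = \{R_{n+1}\leq \hat r_{\alpha,n+1}\}$ is exactly the event that $R_{n+1}$ lies at or below the $\lceil(1-\alpha)(n+1)\rceil$-th smallest value among $R_1,\dots,R_{n+1}$. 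Second, I would invoke the standard rank argument: by exchangeability of $(R_1,\dots,R_{n+1})$, the rank of $R_{n+1}$ among the $n+1$ values is uniform on $\{1,\dots,n+1\}$ (modulo a tie-breaking convention, which only helps), so $\mathbb{P}(R_{n+1}\leq\hat r_{\alpha,n+1})\geq \lceil(1-\alpha)(n+1)\rceil/(n+1)\geq 1-\alpha$. This is essentially \Cref{lm:PIT_onedim} applied to the $R_i$ with $a=0$, $b=1-\alpha$.

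The main obstacle — and the step I would spend the most care on — is verifying that passing the exchangeable vectors $Z_i$ through the \emph{data-dependent} map $\hat T_{n+1}$ preserves exchangeability of the resulting scalars. The subtlety is that $\hat T_{n+1}$ is not a fixed function but is itself a (symmetric) function of the whole sample; the clean way to handle this is to write $R_i = h_i(Z_1,\dots,Z_{n+1})$ where $h = (h_1,\dots,h_{n+1})$ is symmetric in the sense that $h_{\pi(i)}(z_{\pi(1)},\dots,z_{\pi(n+1)}) = h_i(z_1,\dots,z_{n+1})$, and then check this symmetry explicitly for the optimal-assignment map (uniqueness up to a.s.\ ties is enough, and ties again only help coverage) and for the entropic map $T_\varepsilon$ of \Cref{eq:empirical_entropic_map}, since the Sinkhorn potentials $\bff^\star,\bg^\star$ and the Gibbs weights $p_j(\cdot)$ are computed from the cost matrix $[\|z_i-u_j\|^2]_{ij}$ in a way that commutes with permuting the $z_i$. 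Once this equivariance is in hand, exchangeability of $(R_1,\dots,R_{n+1})$ follows from a change of variables, and the rest is the routine univariate conformal argument. A secondary point to address carefully is the target grid $(u_j)$: it must be fixed in advance (not data-dependent), which it is by construction in Section~\ref{subsec:entropic}, so it does not break the symmetry.
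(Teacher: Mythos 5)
Your proposal is correct and rests on the same two pillars as the paper's proof: exchangeability of $(Z_1,\dots,Z_{n+1})$ and the fact that the construction $(\hat T_{n+1},\hat r_{\alpha,n+1})$, hence the set $\widehat{\mathcal{R}}_{\alpha,n+1}$, is a symmetric function of the whole sample --- what you call permutation-equivariance, and which you justify in more detail than the paper does. The only genuine difference is the concluding step. The paper uses exchangeability solely to equate the marginals $\mathbb{P}(Z_i\in\widehat{\mathcal{R}}_{\alpha,n+1})$ for all $i\in[n+1]$, averages them, recognizes the average as $\mathbb{E}\bigl[\hat{\mathbb{U}}_{n+1}(B(0,\hat r_{\alpha,n+1}))\bigr]$, and concludes because the empirical coverage is \emph{deterministically} at least $1-\alpha$ by the very definition of $\hat r_{\alpha,n+1}$ as an infimum; no ranks and no tie-breaking enter. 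You instead pass to the scalars $R_i=\|\hat T_{n+1}(Z_i)\|$, identify the event with $R_{n+1}$ lying at or below the $\lceil(1-\alpha)(n+1)\rceil$-th order statistic, and invoke (sub)uniformity of the rank of $R_{n+1}$, which obliges you to handle ties separately (you do so correctly: ties only increase coverage). Both routes are valid and give the same bound; the averaging argument is marginally cleaner because the pointwise lower bound on empirical coverage replaces any statement about the distribution of a rank, while your route makes the connection to the univariate Lemma~\ref{lm:PIT_onedim} more explicit.
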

 
\begin{proof}
By exchangeability of $Z_1, \ldots, Z_{n+1}$ and symmetry of the set $\widehat{\mathcal{R}}_{\alpha, n+1}$, it holds
$$
    \mathbb{P}(Z_{n+1} \in \widehat{\mathcal{R}}_{\alpha, n+1}) = \mathbb{P}(Z_{i} \in \widehat{\mathcal{R}}_{\alpha, n+1}) \qquad \forall i \in [n+1].
$$
By taking the average on both side, we have: 
\begin{align*}
    \mathbb{P}(Z_{n+1} \in \widehat{\mathcal{R}}_{\alpha, n+1}) &= \frac{1}{n+1}\sum_{i=1}^{n+1} \mathbb{P}(Z_{i} \in \widehat{\mathcal{R}}_{\alpha, n+1}) \\
    &= \mathbb{E} \left[ \frac{1}{n+1}\sum_{i=1}^{n+1} \mathds{1}\{Z_i \in \widehat{\mathcal{R}}_{\alpha, n+1}\}  \right] \\
    &= \mathbb{E} \bigg[ \mathbb{P}_{n+1}(Z_{n+1} \in \widehat{\mathcal{R}}_{\alpha, n+1}) \bigg]  \\
    &\geq 1 - \alpha.
\end{align*}
\end{proof}

\begin{table}
\centering
\resizebox{0.9\textwidth}{!}{%
\begin{tabular}{llllllllllllll}
\toprule
 &  & \multicolumn{12}{r}{} \\
 &  & ansur2 (2) & bio (2) & births1 (2) & calcofi (2) & edm (2) & enb (2) & house (2) & taxi (2) & jura (3) & scpf (3) & sf1 (3) & sf2 (3) \\
epsilon & \#target &  &  &  &  &  &  &  &  &  &  &  &  \\
\midrule
\multirow[t]{4}{*}{0.001} & 4096 & 3.3±0.064 & 0.46±0.057 & 78±70 & 2.6±0.089 & 1.9±0.3 & 0.81±0.21 & 2±0.051 & 7±0.12 & 13±2.6 & 0.78±0.4 & 14±2.6 & 0.82±0.32 \\
 & 8192 & 3.4±0.059 & 0.45±0.057 & 78±70 & 2.6±0.089 & 1.9±0.29 & 0.81±0.2 & 2±0.05 & 7±0.13 & 11±2.6 & 0.73±0.23 & 16±3.9 & 0.4±0.16 \\
 & 16384 & 3.4±0.059 & 0.46±0.058 & 78±70 & 2.6±0.093 & 1.8±0.28 & 0.83±0.21 & 2±0.048 & 7±0.13 & 12±2.3 & 0.87±0.34 & 21±4.8 & 0.44±0.2 \\
 & 32768 & 3.4±0.063 & 0.46±0.058 & 78±70 & 2.6±0.092 & 1.9±0.3 & 0.81±0.2 & 2±0.05 & 7±0.13 & 12±2.6 & 1.2±0.47 & 16±2.9 & 0.57±0.18 \\
\cline{1-14}
\multirow[t]{4}{*}{0.01} & 4096 & 3.3±0.055 & 0.55±0.12 & 78±70 & 2.5±0.084 & 1.9±0.3 & 0.81±0.21 & 2±0.05 & 7.5±0.63 & 11±2.8 & 0.43±0.15 & 12±2.1 & 0.2±0.086 \\
 & 8192 & 3.3±0.054 & 0.56±0.13 & 78±70 & 2.5±0.082 & 1.8±0.3 & 0.8±0.21 & 2±0.049 & 7.5±0.69 & 10±2.6 & 0.37±0.15 & 12±2.8 & 0.17±0.063 \\
 & 16384 & 3.3±0.045 & 0.56±0.12 & 78±70 & 2.5±0.082 & 1.7±0.24 & 0.8±0.21 & 2±0.05 & 7.5±0.71 & 13±4.3 & 0.4±0.18 & 11±2.9 & 0.19±0.076 \\
 & 32768 & 3.3±0.064 & 0.56±0.12 & 78±70 & 2.5±0.085 & 1.7±0.26 & 0.82±0.22 & 2±0.049 & 7.5±0.69 & 10±2.7 & 0.41±0.17 & 12±2.6 & 0.18±0.071 \\
\cline{1-14}
\multirow[t]{4}{*}{0.1} & 4096 & 3.3±0.058 & 0.49±0.011 & 78±70 & 2.5±0.084 & 1.6±0.25 & 0.81±0.21 & 2.3±0.065 & 8.3±1.4 & 9.2±2.8 & 0.37±0.15 & 6.6±0.96 & 0.48±0.1 \\
 & 8192 & 3.3±0.059 & 0.49±0.011 & 78±70 & 2.5±0.084 & 1.6±0.26 & 0.8±0.21 & 2.3±0.065 & 8.2±1.5 & 9.4±2.9 & 0.4±0.15 & 6.1±0.89 & 0.53±0.11 \\
 & 16384 & 3.3±0.054 & 0.49±0.012 & 78±70 & 2.5±0.081 & 1.6±0.26 & 0.8±0.21 & 2.3±0.058 & 8.2±1.4 & 9.4±2.9 & 0.37±0.12 & 6.4±0.83 & 0.45±0.092 \\
 & 32768 & 3.3±0.051 & 0.49±0.011 & 77±70 & 2.5±0.083 & 1.5±0.25 & 0.79±0.2 & 2.3±0.057 & 8.2±1.4 & 8.9±2.9 & 0.36±0.12 & 6.5±1.2 & 0.5±0.1 \\
\cline{1-14}
\multirow[t]{4}{*}{1} & 4096 & 3.6±0.055 & 0.65±0.019 & 78±70 & 2.5±0.1 & 1.7±0.27 & 0.92±0.24 & 3±0.13 & 6.4±0.14 & 13±4 & 0.45±0.16 & 9.5±1.9 & 0.84±0.13 \\
 & 8192 & 3.6±0.067 & 0.59±0.013 & 78±70 & 2.5±0.099 & 1.7±0.26 & 0.91±0.24 & 3±0.14 & 6.3±0.14 & 13±4 & 0.42±0.14 & 10±1.8 & 0.93±0.16 \\
 & 16384 & 3.5±0.072 & 0.57±0.016 & 78±70 & 2.5±0.099 & 1.7±0.27 & 0.91±0.24 & 3±0.13 & 6.4±0.14 & 14±4 & 0.48±0.17 & 9.8±1.7 & 0.91±0.17 \\
 & 32768 & 3.5±0.061 & 0.6±0.028 & 78±71 & 2.5±0.1 & 1.7±0.27 & 0.91±0.24 & 2.9±0.13 & 6.4±0.15 & 13±4 & 0.47±0.17 & 10±1.7 & 0.9±0.17 \\
\cline{1-14}
\bottomrule
\end{tabular}
}
\end{table}

\begin{table}
\centering
\resizebox{0.9\textwidth}{!}{%
\begin{tabular}{lllllll}
\toprule
 &  & \multicolumn{5}{r}{} \\
 &  & slump (3) & households (4) & air (6) & atp1d (6) & atp7d (6) \\
epsilon & \#target &  &  &  &  &  \\
\midrule
\multirow[t]{4}{*}{0.001} & 4096 & 15±7.6 & 37±1.4 & 2.6E+03±1.9E+03 & 81±19 & 8.5E+02±4.5E+02 \\
 & 8192 & 7.9±2 & 36±1.9 & 7.1E+02±56 & 99±41 & 5.9E+02±1.8E+02 \\
 & 16384 & 11±3.7 & 34±1.3 & 6.9E+02±52 & 65±19 & 9.4E+02±3E+02 \\
 & 32768 & 12±4.3 & 36±2.6 & 6.8E+02±36 & 87±28 & 5.1E+02±2E+02 \\
\cline{1-7}
\multirow[t]{4}{*}{0.01} & 4096 & 20±6.8 & 37±1.6 & 8.5E+02±1E+02 & 85±24 & 7.9E+02±4.1E+02 \\
 & 8192 & 12±4.9 & 34±1.7 & 1.3E+03±7E+02 & 82±24 & 4E+02±1.5E+02 \\
 & 16384 & 7.1±2.2 & 33±0.81 & 5.5E+02±47 & 1.1E+02±26 & 3.7E+02±68 \\
 & 32768 & 10±4 & 31±0.97 & 4.8E+02±51 & 42±9.1 & 2.8E+02±98 \\
\cline{1-7}
\multirow[t]{4}{*}{0.1} & 4096 & 5.8±1.3 & 27±1.3 & 3.2E+02±32 & 8.1±1.7 & 33±9.2 \\
 & 8192 & 5.9±1.3 & 26±1.3 & 3.1E+02±33 & 5.7±1 & 27±6.9 \\
 & 16384 & 5.9±1.4 & 25±1 & 3.1E+02±34 & 4±1.4 & 26±7.7 \\
 & 32768 & 5.1±1.1 & 25±1 & 3.1E+02±34 & 3.8±0.88 & 16±5.1 \\
\cline{1-7}
\multirow[t]{4}{*}{1} & 4096 & 14±5.3 & 29±1.3 & 4.3E+02±31 & 6.2±1.7 & 69±25 \\
 & 8192 & 15±5.3 & 30±2.1 & 3.4E+02±38 & 5.6±2.2 & 69±25 \\
 & 16384 & 16±5.6 & 28±1.1 & 4.1E+02±36 & 6.1±2 & 76±27 \\
 & 32768 & 15±5.5 & 29±1.9 & 4.3E+02±38 & 5.6±1.5 & 73±24 \\
\cline{1-7}
\bottomrule
\end{tabular}
}
\end{table}

\begin{table}
\centering
\resizebox{0.9\textwidth}{!}{%
\begin{tabular}{lllllllll}
\toprule
 &  & \multicolumn{7}{r}{} \\
 &  & rf1 (8) & rf2 (8) & wq (14) & oes10 (16) & oes97 (16) & scm1d (16) & scm20d (16) \\
epsilon & \#target &  &  &  &  &  &  &  \\
\midrule
\multirow[t]{4}{*}{0.001} & 4096 & 2E+13±2E+13 & 2E+13±2E+13 & 7.1E+09±3E+09 & 2.9E+08±8.3E+07 & 8.7E+08±4E+08 & 4E+07±3.6E+07 & 1.7E+07±1.1E+07 \\
 & 8192 & 2E+13±2E+13 & 2E+13±2E+13 & 3.7E+09±1.9E+09 & 3.7E+08±1.3E+08 & 1.4E+09±1.2E+09 & 9.3E+05±5E+05 & 2.5E+08±1.9E+08 \\
 & 16384 & 2E+13±2E+13 & 2E+13±2E+13 & 6.6E+09±3.2E+09 & 5.6E+08±4.3E+08 & 2.5E+08±1.3E+08 & 3.5E+05±1.3E+05 & 8.9E+07±5.7E+07 \\
 & 32768 & 2E+13±2E+13 & 2E+13±2E+13 & 3.1E+09±1.2E+09 & 5.5E+08±3E+08 & 3.1E+08±9.5E+07 & 9.7E+05±4.5E+05 & 1.3E+09±1.3E+09 \\
\cline{1-9}
\multirow[t]{4}{*}{0.01} & 4096 & 2E+13±2E+13 & 2E+13±2E+13 & 1.1E+10±7.3E+09 & 4.3E+09±3.8E+09 & 3.5E+09±2.5E+09 & 4.1E+08±3.8E+08 & 1.3E+11±1.1E+11 \\
 & 8192 & 2E+13±2E+13 & 2E+13±2E+13 & 6.4E+10±6E+10 & 3E+10±2.8E+10 & 1E+10±6.1E+09 & 8.1E+08±5.5E+08 & 1.1E+11±1.1E+11 \\
 & 16384 & 2E+13±2E+13 & 2E+13±2E+13 & 3.3E+09±7.9E+08 & 1.1E+09±4.3E+08 & 1E+10±5.7E+09 & 4.8E+07±3.7E+07 & 1.3E+09±8.3E+08 \\
 & 32768 & 2E+13±2E+13 & 2E+13±2E+13 & 5.1E+11±4.9E+11 & 6.5E+09±5E+09 & 4E+09±3.2E+09 & 1.6E+07±9.5E+06 & 2.7E+08±1.3E+08 \\
\cline{1-9}
\multirow[t]{4}{*}{0.1} & 4096 & 2E+13±2E+13 & 2E+13±2E+13 & 8.7E+09±3.7E+09 & 4.8E+04±3.2E+04 & 6E+09±6E+09 & 1.5E+03±6.7E+02 & 1.3E+06±6.4E+05 \\
 & 8192 & 2E+13±2E+13 & 2E+13±2E+13 & 4.8E+09±1.5E+09 & 1.7E+05±1.3E+05 & 6E+09±6E+09 & 6.2E+02±2.8E+02 & 1.2E+06±8.7E+05 \\
 & 16384 & 2E+13±2E+13 & 2E+13±2E+13 & 1.3E+10±6.8E+09 & 5.2E+04±4.7E+04 & 5.6E+09±5.6E+09 & 2.2E+02±46 & 2.9E+05±1E+05 \\
 & 32768 & 2E+13±2E+13 & 2E+13±2E+13 & 7.4E+09±2.9E+09 & 7.6E+03±5.1E+03 & 9.2E+07±8.1E+07 & 1.1E+02±17 & 1.1E+05±3.1E+04 \\
\cline{1-9}
\multirow[t]{4}{*}{1} & 4096 & 2E+13±2E+13 & 2E+13±2E+13 & 8E+08±2E+08 & 6.6E+02±3.4E+02 & 8.3E+05±8.1E+05 & 4.1E+02±76 & 5.2E+05±6.5E+04 \\
 & 8192 & 2E+13±2E+13 & 2E+13±2E+13 & 6.9E+08±1.7E+08 & 3.5E+02±1.8E+02 & 7.7E+05±7.6E+05 & 8.5E+02±3.1E+02 & 1.1E+06±3.9E+05 \\
 & 16384 & 2E+13±2E+13 & 2E+13±2E+13 & 5.3E+08±1.2E+08 & 2.2E+02±1.5E+02 & 4E+05±4E+05 & 1.3E+02±14 & 4.7E+05±1.8E+05 \\
 & 32768 & 2E+13±2E+13 & 2E+13±2E+13 & 5.5E+08±1.5E+08 & 1.9E+02±1.6E+02 & 3.1E+05±3.1E+05 & 1E+02±11 & 3.4E+05±6.4E+04 \\
\cline{1-9}
\bottomrule
\end{tabular}
}
\caption{Mean region size for varying $\varepsilon$ and the number of target points in the ball.}
\end{table}

\end{document}